\documentclass{article}

\usepackage[preprint]{main}

\usepackage[utf8]{inputenc} 
\usepackage[T1]{fontenc}    
\usepackage{hyperref}       
\usepackage{url}            
\usepackage{booktabs}       
\usepackage{amsfonts}       
\usepackage{nicefrac}       
\usepackage[spacing]{microtype}      
\usepackage{xcolor}         
\usepackage{bm}
\usepackage{enumitem}
\usepackage{subcaption} 
\usepackage{graphicx} 
\usepackage{amsmath}
\usepackage{svg}
\usepackage{multirow}
\usepackage[table]{xcolor}

\usepackage{amsmath}
\usepackage{amssymb}
\usepackage{amsthm}
\usepackage[most]{tcolorbox}
\usepackage{pifont}   

\usepackage{tabularray}
\UseTblrLibrary{booktabs}

\newtheorem{theorem}{Theorem}
\newtheorem*{theorem*}{Theorem}

\newtheorem{lemma}{Lemma}
\newtheorem*{lemma*}{Lemma}

\newtheorem{definition}{Definition}

\title{Rethinking Adapter Placement: A Dominant Adaptation Module Perspective}

\newcommand{\affiliationID}[1]{\textsuperscript{\rm{#1}}}
\renewcommand{\and}{\hspace{0.8em}}
\newcommand{\email}[1]{\href{mailto:#1}{#1}}

\author{
    Suoxin Zhang\affiliationID{1}
    \and
    Run He\affiliationID{1}
    \and
    Di Fang\affiliationID{1}
    \and
    Xiang Tan\affiliationID{1}
    \and
    Kaixuan Chen\affiliationID{2}
    \and
    Huiping Zhuang\affiliationID{1}\thanks{Corresponding author: Huiping Zhuang (\email{hpzhuang@scut.edu.cn}).}
    \vspace{0.5em}\\
    \affiliationID{1}South China University of Technology, China \hspace{1em}
    \affiliationID{2}Zhejiang University, China \\ \vspace{-2em}
}

\begin{document}

\maketitle

\begin{abstract}
Low-rank adaptation (LoRA) is a widely used parameter-efficient fine-tuning method that places trainable low-rank adapters into frozen pre-trained models. 
Recent studies show that using fewer LoRA adapters may still maintain or even improve performance, but existing methods still distribute adapters broadly, leaving \emph{where to place a limited number of adapters to maximize performance} largely open.
To investigate this, we introduce \textbf{PAGE} (\textbf{P}rojected \textbf{A}dapter \textbf{G}radient \textbf{E}nergy), a gradient-based sensitivity probe that estimates the initial trainable gradient energy available to each candidate LoRA adapter.
Surprisingly, we find that PAGE is highly concentrated on a single shallow FFN down-projection across two model families and four downstream tasks. 
We term this module the \textbf{dominant adaptation module} and show that its layer index is architecture-dependent but task-stable. 
Motivated by this finding, we propose \textbf{DomLoRA}, a placement method that places a single adapter at the dominant adaptation module.
With only \textbf{${\sim}$0.7\%} of vanilla LoRA's trainable parameters, DomLoRA outperforms it on average across various downstream tasks, including instruction following, mathematical reasoning, code generation, and multi-turn conversation.
This method also improves other LoRA variants, supporting the dominant adaptation module perspective as a practical placement guideline. 
\end{abstract}

\section{Introduction}
\label{sec:intro}
Large language models (LLMs) have achieved strong general-purpose performance across a wide range of language tasks. However, off-the-shelf LLMs are not always optimal for a specific target domain or instruction format. Fine-tuning LLMs on task-specific datasets has therefore become a common practice. The most direct strategy is full fine-tuning, which updates all model parameters and often provides strong adaptation performance. However, as model sizes grow, full fine-tuning becomes increasingly expensive in computation, memory, and storage. Parameter-efficient fine-tuning (PEFT) mitigates these costs by updating only a small subset of parameters while approaching full fine-tuning performance~\cite{houlsby2019parameter}.

Among PEFT methods, low-rank adaptation (LoRA)~\cite{hu2022lora} is the most widely adopted technique, which places trainable low-rank adapters into frozen pre-trained models. 
In common practice, these adapters are placed at attention and feed-forward network (FFN) projections in most or all Transformer blocks~\cite{dettmers2023qlora,he2022towards}. 
However, recent studies show that using fewer adapters may still maintain or even improve performance. Some methods reduce the number of adapted \emph{layers}~\cite{yao2024layerwise,gu2024bottom,ogawa2026layerwise} while keeping all module types within the selected layers (Figure~\ref{fig:intro}(a)). Others reduce the adapted \emph{module types}~\cite{xue2026lora,kunz2025train,hayou2026PLoP}, such as selecting only FFN or only attention projections, while applying the selected adapters across all layers (Figure~\ref{fig:intro}(b)). Nevertheless, these methods still keep adapters across multiple modules, raising an open question: \textbf{where should a limited number of LoRA adapters be placed to maximize adaptation performance?}

\begin{figure}[t]
    \centering
    \includegraphics[width=1.0\textwidth]{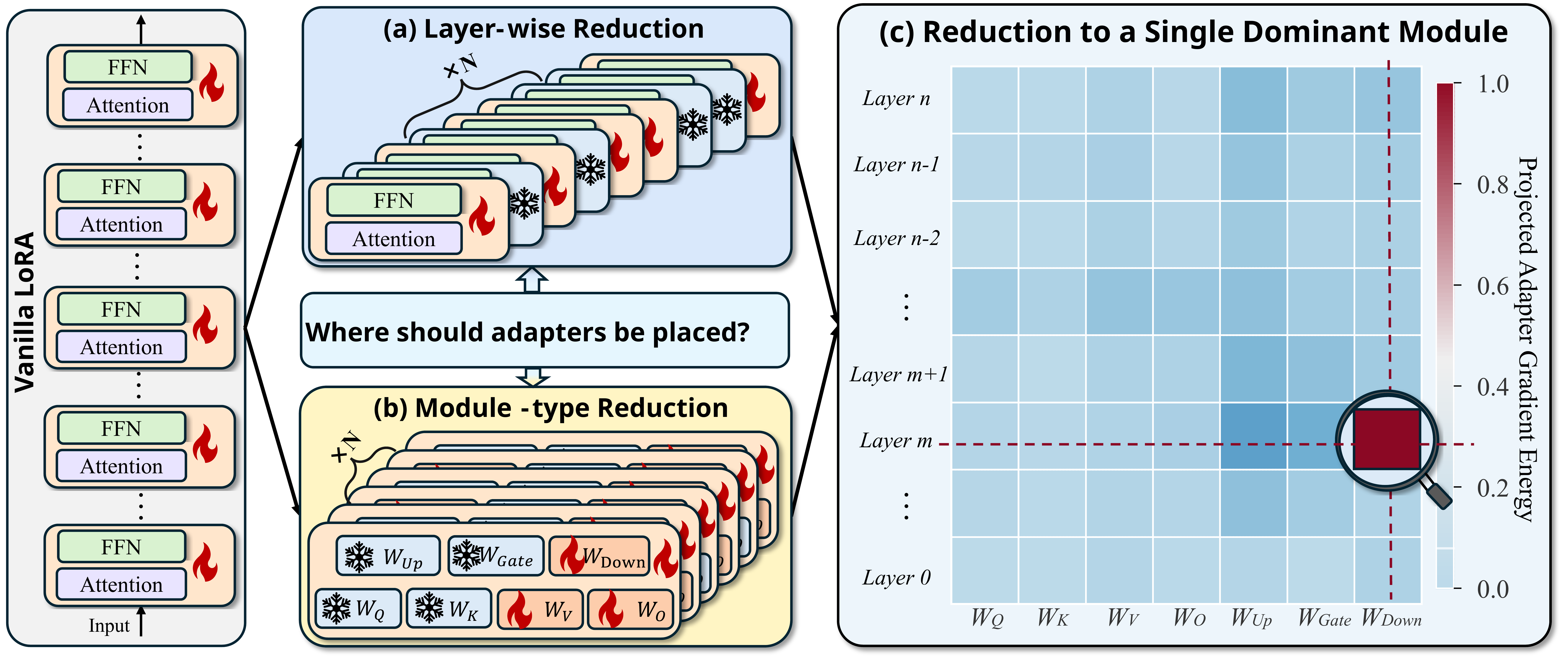}
    \caption{\textbf{From broad to dominant placement.}
    Vanilla LoRA places adapters across many layers and module types.
    (a)~Layer-wise Reduction: fewer layers, all module types retained.
    (b)~Module-type Reduction: fewer module types, all layers retained.
    (c)~Reduction to a Single Dominant Module: PAGE is highly concentrated at one shallow FFN down-projection (magnified), suggesting that a single adapter placed there suffices.}
    \label{fig:intro}
\end{figure}

To explore this question, we introduce \textbf{PAGE} (\textbf{P}rojected \textbf{A}dapter \textbf{G}radient \textbf{E}nergy), a gradient-based sensitivity probe for LoRA placement.
PAGE uses the empirical Fisher sensitivity of pretrained projection weights to estimate the initial trainable gradient energy available to each candidate LoRA adapter.
By evaluating PAGE for every attention and FFN projection, we find that the PAGE values are markedly concentrated at a single shallow FFN down-projection (Figure~\ref{fig:intro}(c)).
We term this module the \textbf{dominant adaptation module}, whose layer index varies across model families but remains stable across diverse downstream tasks on the same backbone, suggesting that it reflects an intrinsic structural property rather than a task-specific effect.

Guided by this finding, we propose \textbf{DomLoRA} (\textbf{Dom}inant adaptation Module \textbf{LoRA}), a LoRA placement method that applies LoRA only to this module while freezing all other parameters. We evaluate DomLoRA on Qwen3-8B~\cite{qwen3technicalreport} and LLaMA-3.1-8B-Instruct~\cite{grattafiori2024llama} across various downstream tasks, including instruction following, mathematical reasoning, code generation, and multi-turn conversation. 
With only \textbf{${\sim}$0.7\%} of vanilla LoRA's trainable parameters, DomLoRA remains competitive with vanilla LoRA and outperforms it on average. Moreover, DomLoRA can be applied as a plug-and-play method to other LoRA variants, improving their average performance as well (e.g., AdaLoRA~\cite{zhang2023adalora} and DoRA~\cite{liu2024dora}).
These results show that DomLoRA is effective across different backbones, task domains, and LoRA variants, supporting the dominant adaptation module perspective as a practical guideline for adapter placement. Key contributions of this paper are summarized as follows:
\begin{itemize}
     \item We introduce \textbf{PAGE}, a gradient-based sensitivity probe that estimates each candidate LoRA adapter's initial trainable gradient energy from empirical Fisher sensitivity. Using PAGE, we identify the \textbf{dominant adaptation module}, a single shallow FFN down-projection where PAGE is highly concentrated.
    \item We show that this module is \textbf{architecture-dependent but task-stable}: its layer index varies across model families but remains stable across downstream tasks within the same backbone.
    \item We propose \textbf{DomLoRA}, a LoRA placement method that applies one low-rank adapter only to the dominant adaptation module while freezing all other parameters.
    \item We validate DomLoRA across two model families and four task domains, showing that it outperforms vanilla LoRA on average using only \textbf{${\sim}$0.7\%} of its trainable parameters. We further show that the same method improves other LoRA variants.
\end{itemize}

\section{Related Work}
\label{sec:related}

\paragraph{Parameter-efficient fine-tuning (PEFT).}
PEFT adapts pre-trained models by updating a small parameter subset while keeping the backbone frozen. Representative methods include adapter tuning~\cite{houlsby2019parameter}, prefix tuning~\cite{li2021prefix}, prompt tuning~\cite{lester2021power}, and LoRA~\cite{hu2022lora}. Later LoRA variants improve adaptive rank allocation~\cite{zhang2023adalora}, weight decomposition~\cite{liu2024dora}, optimization with asymmetric learning rates~\cite{hayou2024loraplus}, and structured low-dimensional parameterization~\cite{Kopiczko2024VeRA}. These methods mainly improve how adapters are parameterized or optimized, whereas we focus on where adapters should be placed.

\paragraph{Reducing LoRA adapter placement.}
Vanilla LoRA commonly places adapters into attention and FFN projections in most or all Transformer blocks, but recent work shows that fewer adapters may still preserve or even improve performance.

One line reduces the number of adapted layers. 
IST~\cite{yao2024layerwise} scores layer importance and updates only top-ranked layers. 
Depth-aware strategies distinguish shallow from deep layers~\cite{gu2024bottom}. 
Similarity-based analyses identify task-critical layers~\cite{ogawa2026layerwise}. 
These methods place insert adapters into multiple module types within the selected layers.

Another line reduces the adapted module types. 
Ablations on reasoning LLMs show that MLP-only LoRA can match broader adaptation performance, with the up projection particularly effective~\cite{xue2026lora}. 
Language adaptation experiments further show that restricting LoRA to feed-forward layers can outperform attention-only placement and even match or slightly exceed joint attention-plus-FFN adaptation~\cite{kunz2025train}. 
PLoP~\cite{hayou2026PLoP} scores module types by neural-feature-norm alignment and adapts only selected projections. 
These methods still apply the selected module types across many or all layers.

The above shows that broad adapter placement is often unnecessary.
However, they still keep multiple adapters in the model. 
DomLoRA further reduces LoRA placement to a single adapter at the dominant adaptation module, achieving strong performance with far fewer trainable parameters.

\paragraph{FFN layers and knowledge localization.}
Prior work suggests that FFN layers play a distinctive role in Transformer representations. Feed-forward layers have been characterized as key-value memories~\cite{geva2021kv}, and knowledge-neuron~\cite{dai2022knowledge} analyses localize factual associations to specific FFN neurons. GPS~\cite{zhang2024gps} further shows that task-relevant parameters are sparse and unevenly distributed across the network. These findings support the view that adaptation demand is non-uniform across modules or depths. However, they do not identify which FFN projection and depth position dominate PEFT adaptation. DomLoRA addresses this gap by showing that empirical Fisher sensitivity concentrates at a single shallow FFN down-projection.

\section{Discovering and Exploiting the Dominant Adaptation Module}
\label{sec:probing}
In this section, we develop DomLoRA by identifying where to place LoRA.
We first define module sensitivity as the average squared sample-wise gradient of each pretrained projection weight in Section~\ref{sec:emp_fisher}.
We then use this sensitivity to derive PAGE, which estimates the initial gradient energy received by each candidate LoRA adapter.
Following this, we show that PAGE is highly concentrated at one shallow FFN down-projection across models and tasks in Section~\ref{sec:observation}.
Finally, in Section~\ref{sec:method}, we introduce DomLoRA, a placement method that places a single adapter at the dominant adaptation module while freezing all remaining parameters.

\subsection{From Module Sensitivity to PAGE}
\label{sec:emp_fisher}

We use sample-wise gradients of pretrained projection weights as the starting point for LoRA placement.
A large gradient norm indicates that the probe loss is sensitive to perturbations of that projection weight.
Since a LoRA adapter modifies the same effective weight through a low-rank update, the gradient of the pretrained weight determines how much gradient can be passed to the LoRA factors at initialization.
We therefore formalize the placement signal as the initial gradient energy received by the trainable LoRA parameters.

\paragraph{Measuring Module Sensitivity.}
Let
$
    \mathcal{D}
    =
    \{(\bm{x}_i,\bm{y}_i)\}_{i=1}^{N}
$
be a probe set of $N$ samples. For sample $i$, let
$\bm{s}_i=(\bm{x}_i,\bm{y}_i)$ denote the full prompt-response sequence,
indexed as $s_{i,1},\ldots,s_{i,T_i}$, and let
$\mathcal{R}_i\subseteq\{1,\ldots,T_i\}$ denote the supervised
target-response positions. We write $p_{\theta}(s_{i,t}\mid \bm{s}_{i,<t})$
for the next-token probability assigned by the model with parameters $\theta$.
The sample-level probe loss is the mean cross-entropy over these positions:
\begin{equation}
  \bar{\ell}_i(\theta)
  =
  \frac{1}{|\mathcal{R}_i|}
  \sum_{t\in\mathcal{R}_i}
  -\log p_{\theta}\!\left(s_{i,t}\mid \bm{s}_{i,<t}\right).
  \label{eq:sample_loss}
\end{equation}

For module type $k$ in layer $l$, let
$\bm{W}_{l,k}\in\mathbb{R}^{d_{\mathrm{out},k}\times d_{\mathrm{in},k}}$
denote the pretrained projection weight. We compute the sample-wise gradient with respect to this weight:
\begin{equation}
  \bm{G}_{i,l,k}
  =
  \nabla_{\bm{W}_{l,k}}\bar{\ell}_i(\theta).
  \label{eq:sample_gradient}
\end{equation}
This gradient is computed on the original pretrained model before LoRA optimization begins, and is used only to measure the sensitivity of module $(l,k)$.
Following the empirical Fisher perspective, we summarize the sample-wise gradients by their average squared norm, which corresponds to the trace of the empirical Fisher block for this projection weight.

\begin{definition}[Module Sensitivity]
\label{def:emp_fisher}
We use the sample-wise gradients in Eq.~\eqref{eq:sample_gradient} to define the sensitivity of module $(l,k)$ as
\begin{equation}
  \mathcal{S}^{\mathrm{emp}}_{l,k}
  =
  \frac{1}{N}
  \sum_{i=1}^{N}
  \left\|
      \bm{G}_{i,l,k}
  \right\|_{\mathrm{F}}^{2}.
  \label{eq:sample_fisher_score}
\end{equation}
\end{definition}

A larger $\mathcal{S}^{\mathrm{emp}}_{l,k}$ means that small changes to the pretrained projection weight $\bm{W}_{l,k}$ have a larger first-order effect on the probe loss.
Thus, it measures the sensitivity of the module's effective weight.
However, LoRA does not train $\bm{W}_{l,k}$ directly; it trains the low-rank factors $\bm{A}_{l,k}$ and $\bm{B}_{l,k}$.
We next show how this full-weight sensitivity is transferred to the initial gradients of the LoRA factors.

\paragraph{Bridging Module Sensitivity and Initial LoRA Gradients.}
A LoRA adapter parameterizes the effective weight of module $(l,k)$ as
\begin{equation}
    \bm{W}_{l,k}
    =
    \bm{W}^{0}_{l,k} + s\,\bm{B}_{l,k}\bm{A}_{l,k},
    \qquad
    s = \alpha/r,
    \label{eq:lora_reparam}
\end{equation}
where
$\bm{B}_{l,k}\in\mathbb{R}^{d_{\mathrm{out},k}\times r}$,
$\bm{A}_{l,k}\in\mathbb{R}^{r\times d_{\mathrm{in},k}}$, and only
$\bm{A}_{l,k}$, $\bm{B}_{l,k}$ are trainable.
Since the effective weight depends on the LoRA factors through the low-rank
term $s\bm{B}_{l,k}\bm{A}_{l,k}$, the full-weight gradient
$\bm{G}_{i,l,k}$ induces gradients on the LoRA factors through the chain rule.

\begin{lemma}
\label{lem:initial_lora_grads}
Under the standard LoRA setup~\cite{hu2022lora}
($\bm{B}_{l,k}=\bm{0}$, with $\bm{A}_{l,k}$ set by Kaiming-uniform~\cite{he2015delving}), the trainable gradients at initialization are
\begin{equation}
    \nabla_{\bm{A}_{l,k}}\bar{\ell}_i = \bm{0},
    \qquad
    \nabla_{\bm{B}_{l,k}}\bar{\ell}_i
    =
    s\,\bm{G}_{i,l,k}\,\bm{A}_{l,k}^{\top},
    \label{eq:initial_lora_grads}
\end{equation}
where $\bm{G}_{i,l,k}$ is the sample-wise gradient defined in
Eq.~\eqref{eq:sample_gradient}.
\end{lemma}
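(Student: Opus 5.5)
The plan is a direct chain-rule computation through the reparameterization in Eq.~\eqref{eq:lora_reparam}. The loss $\bar{\ell}_i$ depends on the LoRA factors only through the effective weight. So I will first write the first-order variation of $\bar{\ell}_i$ in terms of a perturbation of that weight, and then substitute the perturbation induced by perturbing $\bm{A}_{l,k}$ or $\bm{B}_{l,k}$.

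Concretely, the key identification is this. When the adapter is inserted with $\bm{B}_{l,k}=\bm{0}$, the effective weight equals $\bm{W}^{0}_{l,k}$ exactly, and every other parameter is untouched. The forward pass of the LoRA-augmented model at initialization therefore coincides with that of the pretrained model. Hence the gradient of $\bar{\ell}_i$ with respect to the effective weight, evaluated at initialization, is precisely $\bm{G}_{i,l,k}$ from Eq.~\eqref{eq:sample_gradient}. This is the step I expect to need the most care. The claim relies on the forward-pass equality, i.e.\ $s\bm{B}\bm{A}=\bm{0}$, so that the sample-wise gradient computed on the original model can be reused verbatim. I will state it explicitly rather than leave it implicit. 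If dropout is applied on the adapter input, I will note that it is treated as identity (evaluation mode) for the probe.

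Next, I will use the differential. Writing $\bm{W}=\bm{W}^{0}+s\bm{B}\bm{A}$ gives $d\bm{W}=s\,(d\bm{B})\bm{A}+s\,\bm{B}(d\bm{A})$. Therefore
\[
d\bar{\ell}_i=\langle \bm{G}_{i,l,k},d\bm{W}\rangle_{\mathrm{F}}=s\,\mathrm{tr}\!\left(\bm{G}_{i,l,k}^{\top}(d\bm{B})\bm{A}\right)+s\,\mathrm{tr}\!\left(\bm{G}_{i,l,k}^{\top}\bm{B}(d\bm{A})\right).
\]
Applying the cyclic property of the trace gives the two gradients:
\[
\nabla_{\bm{B}}\bar{\ell}_i=s\,\bm{G}_{i,l,k}\bm{A}^{\top},\qquad \nabla_{\bm{A}}\bar{\ell}_i=s\,\bm{B}^{\top}\bm{G}_{i,l,k}.
\]
I will check dimensions as I go: $\bm{G}\bm{A}^{\top}$ is $d_{\mathrm{out}}\times r$, and $\bm{B}^{\top}\bm{G}$ is $r\times d_{\mathrm{in}}$.

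Finally, I will evaluate both expressions at initialization. Setting $\bm{B}_{l,k}=\bm{0}$ kills the $\bm{A}$-gradient. The $\bm{B}$-gradient is $s\,\bm{G}_{i,l,k}\bm{A}_{l,k}^{\top}$, with $\bm{A}_{l,k}$ the Kaiming-uniform draw. The specific distribution of $\bm{A}_{l,k}$ plays no role in the identity itself; it matters only for later energy estimates.
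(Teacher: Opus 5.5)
Your proposal is correct and takes the same route as the paper's proof. Both differentiate \(\bm{W}=\bm{W}^{0}+s\bm{B}\bm{A}\), write \(\mathrm{d}\bar{\ell}_i=\langle \bm{G}_{i,l,k},\mathrm{d}\bm{W}\rangle_{\mathrm{F}}\), use trace cyclicity to read off \(\nabla_{\bm{B}}\bar{\ell}_i=s\,\bm{G}_{i,l,k}\bm{A}^{\top}\) and \(\nabla_{\bm{A}}\bar{\ell}_i=s\,\bm{B}^{\top}\bm{G}_{i,l,k}\), and then set \(\bm{B}=\bm{0}\). Your explicit remark that \(\bm{B}=\bm{0}\) makes the forward pass coincide with the pretrained model, so that the pretrained-model gradient \(\bm{G}_{i,l,k}\) is the right effective-weight gradient, is left implicit in the paper; its ``general'' formulas written with \(\bm{G}_{i,l,k}\) are strictly valid only at \(\bm{B}=\bm{0}\), so your version is slightly more careful.
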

\begin{proof}
See Appendix~\ref{app:initial_lora_gradient}.
\end{proof}

Lemma~\ref{lem:initial_lora_grads} shows that the initial LoRA gradient is not the full-weight gradient itself, but its projection through the randomly initialized factor $\bm{A}_{l,k}$.
Therefore, to compare candidate modules, we measure the energy of this projected trainable gradient.
Because $\bm{A}_{l,k}$ is randomly initialized, we average over its initialization distribution to obtain a module-level quantity.
This motivates PAGE.

\begin{definition}[Projected Adapter Gradient Energy]
\label{def:page}
For a candidate LoRA adapter at module $(l,k)$, initialized as in
Lemma~\ref{lem:initial_lora_grads}, we define the
\emph{Projected Adapter Gradient Energy} (PAGE) as the expected squared norm of
the nonzero trainable gradient received at initialization:
\begin{equation}
    \mathrm{PAGE}_{l,k}
    =
    \mathbb{E}_{\bm{A}_{l,k}}
    \!\left[
        \frac{1}{N}
        \sum_{i=1}^{N}
        \left\|
            \nabla_{\bm{B}_{l,k}}\bar{\ell}_i
        \right\|_{\mathrm{F}}^{2}
    \right]
    =
    \mathbb{E}_{\bm{A}_{l,k}}
    \!\left[
        \frac{1}{N}
        \sum_{i=1}^{N}
        \left\|
            s\,\bm{G}_{i,l,k}\bm{A}_{l,k}^{\top}
        \right\|_{\mathrm{F}}^{2}
    \right].
    \label{eq:page_definition}
\end{equation}
\end{definition}

The expectation in Eq.~\eqref{eq:page_definition} is taken over the random
initialization of $\bm{A}_{l,k}$. Expanding the squared Frobenius norm via the
trace, PAGE can be rewritten as
\begin{equation}
    \mathrm{PAGE}_{l,k}
    =
    \frac{s^{2}}{N}
    \sum_{i=1}^{N}
    \mathbb{E}_{\bm{A}_{l,k}}
    \!\left[
        \operatorname{tr}
        \!\left(
            \bm{G}_{i,l,k}^{\top}\bm{G}_{i,l,k}\;
            \bm{A}_{l,k}^{\top}\bm{A}_{l,k}
        \right)
    \right].
    \label{eq:page_trace_form}
\end{equation}
Since $\bm{G}_{i,l,k}$ is computed on the pretrained model before LoRA
optimization begins, it is fixed with respect to this expectation. Therefore, evaluating PAGE reduces to computing
$\mathbb{E}_{\bm{A}_{l,k}}[\bm{A}_{l,k}^{\top}\bm{A}_{l,k}]$ under the LoRA
initialization. The following theorem gives this expectation and the resulting
expression for PAGE.

\begin{theorem}
\label{thm:page_closed_form}
Let the entries of $\bm{A}_{l,k}\in\mathbb{R}^{r\times d_{\mathrm{in},k}}$
be drawn i.i.d.\ from
$\mathcal{U}\!\bigl(
    -1/\sqrt{d_{\mathrm{in},k}},\;
     1/\sqrt{d_{\mathrm{in},k}}
\bigr)$.
Then
\begin{equation}
    \mathbb{E}_{\bm{A}_{l,k}}
    \!\left[
        \bm{A}_{l,k}^{\top}\bm{A}_{l,k}
    \right]
    =
    \frac{r}{3\,d_{\mathrm{in},k}}\,
    \bm{I}_{d_{\mathrm{in},k}},
    \label{eq:expected_ATA}
\end{equation}
where $\bm{I}_{d_{\mathrm{in},k}}$ is a
$d_{\mathrm{in},k}\times d_{\mathrm{in},k}$ identity matrix.
Consequently, PAGE can be written as:
\begin{equation}
    \mathrm{PAGE}_{l,k}
    =
    \frac{s^{2}\,r}{3\,d_{\mathrm{in},k}}\;
    \mathcal{S}^{\mathrm{emp}}_{l,k}.
    \label{eq:page_closed_form}
\end{equation}
\end{theorem}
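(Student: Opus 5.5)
The proof splits into two steps. First we compute the expected Gram matrix entrywise. Then we substitute it into the trace form Eq.~\eqref{eq:page_trace_form} and use linearity.

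For the first step, write $\bm{A}=\bm{A}_{l,k}$ with entries $a_{mj}$, $m\in\{1,\dots,r\}$, $j\in\{1,\dots,d_{\mathrm{in},k}\}$. Then
\begin{equation*}
(\bm{A}^{\top}\bm{A})_{jj'}=\sum_{m=1}^{r} a_{mj}a_{mj'}.
\end{equation*}
For $j\neq j'$, independence gives $\mathbb{E}[a_{mj}a_{mj'}]=\mathbb{E}[a_{mj}]\,\mathbb{E}[a_{mj'}]=0$, because the uniform law on $(-c,c)$ with $c=1/\sqrt{d_{\mathrm{in},k}}$ is symmetric and so has mean zero. For $j=j'$ we need the second moment of this uniform law:
\begin{equation*}
\mathbb{E}[a^2]=\frac{1}{2c}\int_{-c}^{c}x^2\,dx=\frac{c^2}{3}=\frac{1}{3d_{\mathrm{in},k}}.
\end{equation*}
Summing over the $r$ rows gives diagonal entries $r/(3d_{\mathrm{in},k})$, which is Eq.~\eqref{eq:expected_ATA}.

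For the second step, $\bm{G}_{i,l,k}$ is deterministic with respect to $\bm{A}$, as noted before the theorem. Since trace and expectation are both linear, they commute, so
\begin{equation*}
\mathbb{E}\bigl[\operatorname{tr}(\bm{G}_{i,l,k}^{\top}\bm{G}_{i,l,k}\bm{A}^{\top}\bm{A})\bigr]=\operatorname{tr}\bigl(\bm{G}_{i,l,k}^{\top}\bm{G}_{i,l,k}\,\mathbb{E}[\bm{A}^{\top}\bm{A}]\bigr).
\end{equation*}
Substituting Eq.~\eqref{eq:expected_ATA}, the right-hand side becomes
\begin{equation*}
\frac{r}{3d_{\mathrm{in},k}}\operatorname{tr}(\bm{G}_{i,l,k}^{\top}\bm{G}_{i,l,k})=\frac{r}{3d_{\mathrm{in},k}}\|\bm{G}_{i,l,k}\|_{\mathrm{F}}^2.
\end{equation*}
Multiplying by $s^2/N$ and summing over $i$ then yields $\frac{s^2 r}{3d_{\mathrm{in},k}}\mathcal{S}^{\mathrm{emp}}_{l,k}$ by Definition~\ref{def:emp_fisher}.

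I expect no substantive obstacle. The argument needs only the zero mean, the $c^2/3$ variance, and the fact that $\bm{G}$ does not depend on $\bm{A}$. It also relies on the trace identity used to go from Eq.~\eqref{eq:page_definition} to Eq.~\eqref{eq:page_trace_form}, namely
\begin{equation*}
\|\bm{G}\bm{A}^{\top}\|_{\mathrm{F}}^2=\operatorname{tr}(\bm{A}\bm{G}^{\top}\bm{G}\bm{A}^{\top})=\operatorname{tr}(\bm{G}^{\top}\bm{G}\bm{A}^{\top}\bm{A}),
\end{equation*}
which holds by cyclicity of the trace. I would state it explicitly for completeness.

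The one point to flag is that the stated bound $1/\sqrt{d_{\mathrm{in},k}}$ is taken as given by the theorem's hypothesis. PyTorch's actual Kaiming-uniform initialization with $a=\sqrt5$ yields exactly this bound, but the proof does not depend on that.
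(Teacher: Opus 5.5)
Your proposal is correct and follows the paper's proof essentially step for step. You compute $\mathbb{E}[\bm{A}_{l,k}^{\top}\bm{A}_{l,k}]$ entrywise: off-diagonal entries vanish by independence and zero mean, and each diagonal entry is a sum of $r$ second moments equal to $1/(3d_{\mathrm{in},k})$. You then move the expectation inside the trace and identify $\frac{1}{N}\sum_i\|\bm{G}_{i,l,k}\|_{\mathrm{F}}^{2}$ with $\mathcal{S}^{\mathrm{emp}}_{l,k}$. Your explicit integral for the second moment and the cyclic-trace justification of Eq.~\eqref{eq:page_trace_form} only fill in details the paper asserts without derivation.
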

\begin{proof}
See Appendix~\ref{app:page_expression}.
\end{proof}

Eq.~\eqref{eq:page_closed_form} makes PAGE a LoRA placement probe: it estimates how much trainable gradient signal a candidate adapter receives at initialization. With shared $r$ and $s$, PAGE is determined only by two module-dependent quantities: the empirical sensitivity $\mathcal{S}^{\mathrm{emp}}_{l,k}$ and the input dimension $d_{\mathrm{in},k}$.

\subsection{Observation: PAGE Reveals a Dominant Adaptation Module}
\label{sec:observation}

We evaluate PAGE (Eq.~\eqref{eq:page_closed_form}) for every attention and FFN projection in Qwen3-8B~\cite{qwen3technicalreport} and LLaMA-3.1-8B-Instruct~\cite{grattafiori2024llama} across four fine-tuning datasets: WizardLM-Evol-Instruct~\cite{xu2024wizardlm}, T\"{u}lu V2~\cite{ivison2023camels}, MetaMathQA~\cite{yu2024metamath}, and Magicoder-Evol-Instruct~\cite{wei2024magicoder}. We show WizardLM-Evol-Instruct in the main text and the full results in Appendix~\ref{app:page_additional}. For each run, we compute PAGE at six full fine-tuning checkpoints: Steps~0, 1, 50, 100, 200, and 390, where Step~390 is the final checkpoint.

\begin{figure}[htbp]
    \centering
    \includegraphics[width=1.0\textwidth]{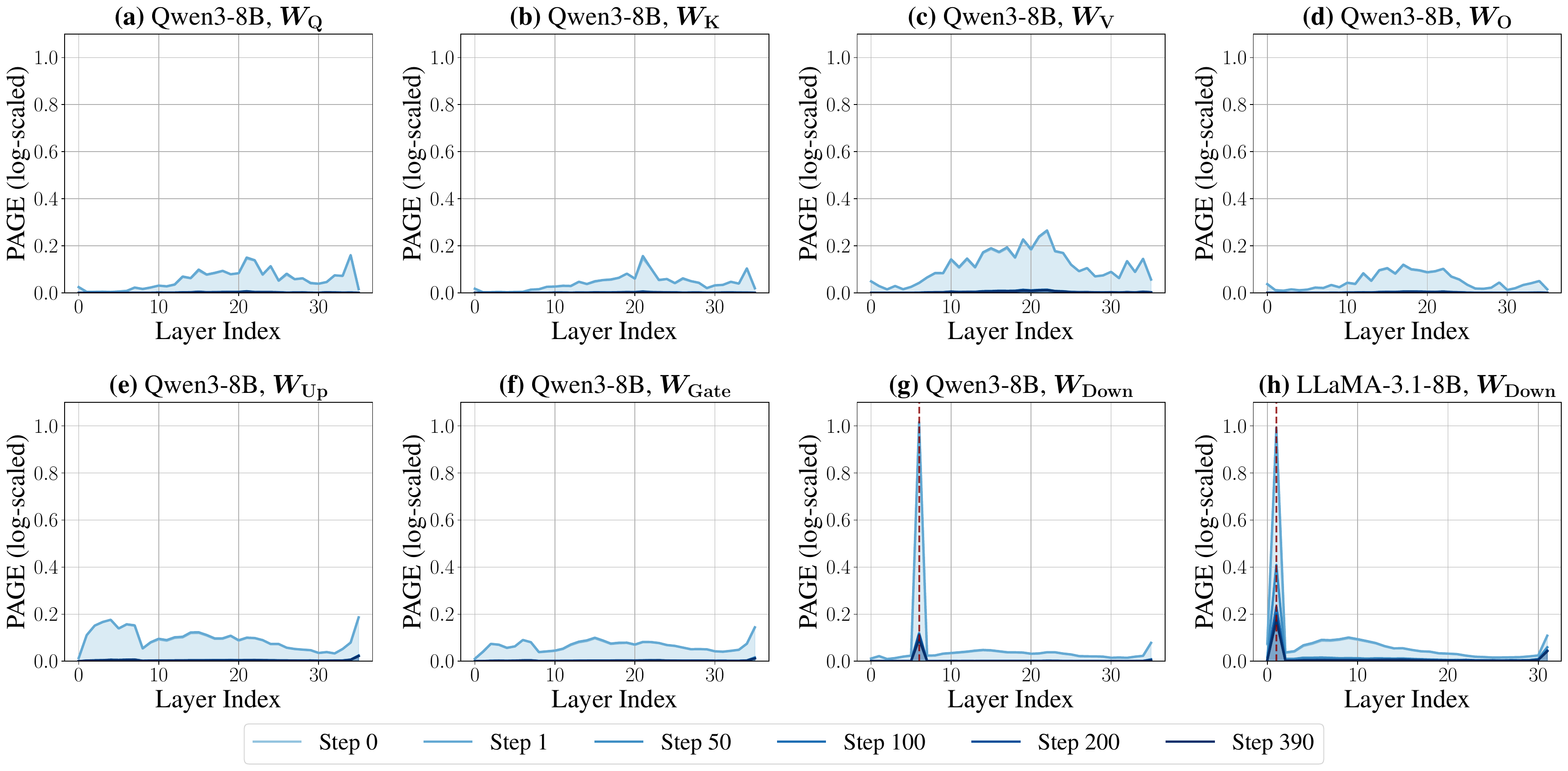}
    \caption{\textbf{PAGE across projection modules.}
    (a)--(g) show all attention and FFN projections of Qwen3-8B, and (h) shows the FFN down-projection of LLaMA-3.1-8B-Instruct.
    Dashed vertical lines indicate the dominant adaptation module.}
    \label{fig:page_matrix}

\end{figure}

\paragraph{PAGE is highly concentrated at one shallow FFN down-projection.}
As shown in Figure~\ref{fig:page_matrix}, the PAGE values vary substantially across module types and layers. Attention projections and FFN up/gate projections (a)--(f) exhibit relatively small PAGE values that change smoothly across layers.

In contrast, FFN down-projections (g) for Qwen3-8B, (h) for LLaMA-3.1-8B exhibit a localized PAGE peak at a shallow layer: Layer~6 in Qwen3-8B and Layer~1 in LLaMA-3.1-8B. We identify this shallow FFN down-projection as the \emph{dominant adaptation module}.

\paragraph{The PAGE peak is a pre-trained structural property.}
The PAGE peak is already present at Step~0, before any fine-tuning update. It remains at the same layer after fine-tuning begins and persists across all later checkpoints (Steps~1--390), indicating that the dominant adaptation module is a stable structural feature of the backbone.

Quantitatively, in Qwen3-8B on WizardLM-Evol-Instruct, the Layer~6 FFN down-projection is only one of 252 candidate projection modules, yet it accounts for 16.5--18.2\% of the aggregate PAGE over all projection modules. Among FFN down-projections alone, this single module accounts for 74.7--79.0\% of the aggregate PAGE. These proportions remain stable from Step~0 to Step~390.

This observation suggests that the dominant adaptation module is already encoded in the pre-trained model and is preserved during fine-tuning rather than created by the fine-tuning process.

\subsection{From PAGE to DomLoRA}
\label{sec:method}

The PAGE concentration observed in Section~\ref{sec:observation} remains stable across checkpoints and downstream tasks (Appendix~\ref{app:page_additional}). In both Qwen3-8B and LLaMA-3.1-8B, the dominant PAGE peak consistently appears at an FFN down-projection.

DomLoRA turns this finding into a placement method: place one LoRA adapter at the dominant FFN down-projection and freeze all remaining parameters.
We identify this module using Step-0 PAGE on the pretrained backbone, requiring only sample-wise gradients from 32 supervised samples and no preliminary fine-tuning or optimizer states.
Once identified, the same module is reused across downstream tasks:

\begin{equation}
  l^\star
  =
  \operatorname*{arg\,max}_{0 \le l < L}
  \mathrm{PAGE}_{l,\,\mathrm{down}},
  \qquad
  \bm{W}_{\mathrm{dom}}
  =
  \bm{W}^{(l^\star)}_{\mathrm{down}}.
  \label{eq:dominant_layer}
\end{equation}

For the FFN down-projection $\bm{W}_{\mathrm{down}}^{(l)}\in \mathbb{R}^{d\times d_{\mathrm{ff}}}$, DomLoRA parameterizes the effective weight as
\begin{equation}
    \widetilde{\bm{W}}_{\mathrm{down}}^{(l)}
    =
    \bm{W}_{\mathrm{down}}^{(l)}
    +
    \mathbb{I}(l=l^\star)
    \frac{\alpha}{r}
    \bm{B}^\star \bm{A}^\star,
    \label{eq:domlora_update}
\end{equation}
where $\bm{B}^\star\in\mathbb{R}^{d\times r}$ and $\bm{A}^\star\in\mathbb{R}^{r\times d_{\mathrm{ff}}}$. Only $\bm{A}^\star$ and $\bm{B}^\star$ are trainable; all pretrained weights and all other projection modules remain frozen.

\section{Experiments}
\label{sec:experiments}

We evaluate DomLoRA along three main axes: comparison with adapter-placement baselines, generalization to LoRA variants, and the necessity of the dominant adaptation module. Specifically, our main experiments are organized around the following research questions:

\begin{itemize}
    \item \textbf{RQ1: How does DomLoRA compare with adapter-placement baselines?}
    (Section~\ref{sec:adapter_placement_results})

    \item \textbf{RQ2: Does DomLoRA generalize to LoRA variants?}
    (Section~\ref{sec:variant_results})

    \item \textbf{RQ3: Is the dominant adaptation module necessary?}
    (Section~\ref{sec:ablation})
\end{itemize}

Additional analyses in the appendix cover training-time and memory efficiency, LoRA rank sensitivity, and full-parameter updates:

\begin{itemize}
    \item \textbf{A1: How much training time and memory does DomLoRA save?}
    (Appendix~\ref{app:efficiency})

    \item \textbf{A2: Is DomLoRA sensitive to the LoRA rank?}
    (Appendix~\ref{app:rank_ablation})

    \item \textbf{A3: Does the dominant module effect persist under full-parameter updates?}
    (Appendix~\ref{app:full_ft})
\end{itemize}

\subsection{Experimental Setup}
\label{sec:setup}

\paragraph{Tasks, datasets, and evaluation.} We evaluate DomLoRA on two LLMs: LLaMA-3.1-8B-Instruct~\cite{grattafiori2024llama} and Qwen3-8B~\cite{qwen3technicalreport}. We consider four supervised fine-tuning settings:
\begin{itemize}
    \item For \textbf{general instruction tuning}, we fine-tune each backbone on the 100K subset of T\"{u}lu V2~\cite{ivison2023camels} and evaluate them on MMLU~\cite{hendrycks2021mmlu}, TyDiQA~\cite{clark2020tydi}, CommonsenseQA~\cite{talmor2019commonsenseqa}, TruthfulQA~\cite{lin2022truthfulqa}, GSM8K~\cite{cobbe2021training}, and LogiQA~\cite{liu2020logiqa}. We report Acc for MMLU, CommonsenseQA, TruthfulQA, and LogiQA, and EM for TyDiQA and GSM8K.
    \item For \textbf{multi-turn conversational tuning}, we fine-tune each backbone on 143K examples from WizardLM-Evol-Instruct-V2~\cite{xu2024wizardlm} and evaluate the resulting models on MT-Bench~\cite{Zheng2024mtbench}. GPT-5 serves as the judge under the score-based single-answer grading protocol, and we use the average score over both conversation turns as the evaluation metric.
    \item For \textbf{mathematical reasoning}, we fine-tune each backbone on MetaMathQA-395K~\cite{yu2024metamath} and evaluate the resulting models on GSM8K~\cite{cobbe2021training} and MATH~\cite{hendrycks2021measuring}, using EM as the evaluation metric.
    \item For \textbf{code generation}, we fine-tune each backbone on Magicoder-Evol-Instruct-110K~\cite{wei2024magicoder} and evaluate the resulting models on HumanEval~\cite{chen2021humaneval} and HumanEval+~\cite{liu2023is}, using pass@1 as the evaluation metric.

\end{itemize}
For reproducibility, we use greedy decoding with a temperature of 0 for all benchmarks. Since MT-Bench is scored on a 1--10 scale, we multiply the MT-Bench score by 10 only when computing cross-benchmark averages.

\paragraph{Training details.} We set the LoRA rank to 64 and the learning rate to 2e-5. The learning rate is linearly warmed up during the first 3\% of training steps and then decayed with a cosine scheduler. Additional details are provided in Appendix~\ref{app:exp_details}.

\subsection{Comparing DomLoRA with Adapter-Placement Baselines (RQ1)}
\label{sec:adapter_placement_results}
We compare DomLoRA with three adapter-placement baselines: vanilla LoRA~\cite{hu2022lora} that inserts adapters broadly across layers and projections, IST~\cite{yao2024layerwise} that reduces adapted layers, and PLoP~\cite{hayou2026PLoP} that reduces adapted module types.

\subsubsection{General Instruction Tuning}
\label{sec:adapter_general}

Table~\ref{tab:placement} compares DomLoRA with adapter-placement baselines on general instruction tuning. Despite using only 2.1--2.3M trainable parameters (about \textbf{${\sim}$0.7\%} of vanilla LoRA), DomLoRA achieves the best average score on both backbones. On Qwen3-8B, it improves the average from 72.9 to 74.5 over vanilla LoRA, with clear gains on TyDiQA, CQA, and GSM8K. The gain is larger on LLaMA-3.1-8B, where the average increases from 60.8 to 64.8, mainly driven by TruthfulQA, GSM8K, and LogiQA. Although DomLoRA is not the best on every metric, its strongest average performance suggests that broad adapter placement is not necessary for general instruction tuning.

\begin{table*}[ht]
\centering
\caption{General instruction tuning against adapter-placement baselines.
\colorbox{gray!15}{Shaded rows} indicate DomLoRA, and \textbf{bold} marks the best result within each model group.}
\label{tab:placement}

\footnotesize
\begin{tblr}{
  width       = \linewidth,
  colspec     = {
    Q[l,m,2]
    Q[c,m,1.3]
    Q[c,m,1.0]
    Q[c,m,1.0]
    Q[c,m,1.0]
    Q[c,m,1.0]
    Q[c,m,1.35]
    Q[c,m,1.0]
    Q[c,m,1.0]
    Q[c,m,1.0]
  },
  colsep      = 2.2pt,
  row{1}      = {font=\bfseries},
  row{5,9}    = {bg=gray!15},
  cell{2}{1}  = {r=4}{font=\bfseries},
  cell{6}{1}  = {r=4}{font=\bfseries},
}
  \toprule
  Model        & Method                      & Params        & MMLU          & TyDiQA         & CQA            & TruthfulQA     & GSM8K          & LogiQA         & Avg.           \\
  \midrule
  Qwen3-8B     & LoRA~\cite{hu2022lora}      & 334M          & 71.0          & 68.8           & 81.3           & 73.3           & 87.0           & 56.2           & 72.9           \\
               & PLoP~\cite{hayou2026PLoP}   & 131M          & 70.9          & 70.8           & 82.1           & 71.9           & 89.8           & \textbf{57.3}  & 73.8           \\
               & IST~\cite{yao2024layerwise} & 334M          & 70.3          & 69.0           & 80.8           & \textbf{74.5}  & 91.3           & 56.8           & 73.8           \\
               & DomLoRA                     & \textbf{2.1M} & \textbf{71.4} & \textbf{71.4}  & \textbf{82.6}  & 72.6           & \textbf{92.6}  & 56.5           & \textbf{74.5}  \\
  \hline
  LLaMA-3.1-8B & LoRA~\cite{hu2022lora}      & 321M          & 63.5          & 67.2           & 69.8           & 45.2           & 81.4           & 37.6           & 60.8           \\
               & PLoP~\cite{hayou2026PLoP}   & 125M          & 61.8          & 69.6           & 70.9           & 44.7           & 81.3           & 35.3           & 60.6           \\
               & IST~\cite{yao2024layerwise} & 321M          & 62.5          & 69.1           & \textbf{73.7}  & 38.8           & 82.5           & 36.9           & 60.6           \\
               & DomLoRA                     & \textbf{2.3M} & \textbf{64.1} & \textbf{69.9}  & 73.5           & \textbf{54.4}  & \textbf{85.5}  & \textbf{41.2}  & \textbf{64.8}  \\
  \bottomrule
\end{tblr}
\end{table*}

\subsubsection{Mathematical Reasoning, Code Generation, and Multi-Turn Conversation}
\label{sec:adapter_tasks}

Table~\ref{tab:task_placement} summarizes mathematical reasoning, code generation, and multi-turn conversation results. The advantage of DomLoRA becomes more pronounced in these settings: it improves the average over vanilla LoRA from 63.5 to 71.7 on Qwen3-8B and from 54.1 to 60.0 on LLaMA-3.1-8B. The Qwen3-8B gains come mainly from MATH and MT-Bench, while on LLaMA-3.1-8B, DomLoRA outperforms all baselines on every reported metric. These results show that dominant-module placement extends beyond general instruction tuning.

\begin{table*}[ht]
\centering
\caption{Reasoning, coding, and conversation results on Qwen3-8B and LLaMA-3.1-8B-Instruct. \colorbox{gray!15}{Shaded rows} indicate DomLoRA, and \textbf{bold} marks the best result.}
\label{tab:task_placement}

\footnotesize
\begin{tblr}{
  width       = \linewidth,
  colspec     = {
    Q[l,m,1.8]
    Q[c,m,1.3]
    Q[c,m,1.0]
    Q[c,m,1.0]
    Q[c,m,1.0]
    Q[c,m,1.35]
    Q[c,m,1.35]
    Q[c,m,1.35]
    Q[c,m,1.0]
  },
  colsep      = 2pt,
  row{1}      = {font=\bfseries},
  row{5,9}    = {bg=gray!15},
  cell{2}{1}  = {r=4}{font=\bfseries},
  cell{6}{1}  = {r=4}{font=\bfseries},
}
  \toprule
  Model        & Method                      & Params        & GSM8K          & MATH           & HumanEval      & HumanEval+     & MT-Bench       & Avg.           \\
  \midrule
  Qwen3-8B     & LoRA~\cite{hu2022lora}      & 334M          & 86.66          & 51.30          & 61.0           & 56.1           & 62.50          & 63.5           \\
               & PLoP~\cite{hayou2026PLoP}   & 131M          & 87.6             & 54.3             & 62.8           & 58.5           & 69.1           & 66.5             \\
               & IST~\cite{yao2024layerwise} & 334M          & 87.2             & 51.2             & \textbf{65.9}  & \textbf{61.0}  & 63.2           & 65.7             \\
               & DomLoRA                     & \textbf{2.1M} & \textbf{92.7}  & \textbf{65.6}  & 64.6           & 59.8           & \textbf{75.9}  & \textbf{71.7}  \\
  \hline
  LLaMA-3.1-8B & LoRA~\cite{hu2022lora}      & 321M          & 83.2           & 38.6           & 45.1           & 42.1           & 61.6           & 54.1           \\
               & PLoP~\cite{hayou2026PLoP}   & 125M          & 82.3           & 41.0           & 43.3           & 40.2           & 60.8           & 53.5           \\
               & IST~\cite{yao2024layerwise} & 321M          & 83.3           & 41.6           & 45.1           & 42.1           & 62.1           & 54.9           \\
               & DomLoRA                     & \textbf{2.3M} & \textbf{85.1}  & \textbf{45.0}  & \textbf{53.0}  & \textbf{50.0}  & \textbf{67.0}  & \textbf{60.0}  \\
  \bottomrule
\end{tblr}
\end{table*}

\subsection{Generalizing Dominant-Module Placement to LoRA Variants (RQ2)}
\label{sec:variant_results}
We further evaluate how dominant module placement affects representative LoRA variants.
These variants modify different aspects of the LoRA update, including rank allocation (AdaLoRA~\cite{zhang2023adalora}), magnitude-direction decomposition (DoRA~\cite{liu2024dora}), factor-wise learning rates (LoRA+~\cite{hayou2024loraplus}), low-rank update denoising (NoRM~\cite{jiang2025finetuning}), and block-wise update expressiveness (GraLoRA~\cite{jung2025gralora}).

\subsubsection{General Instruction Tuning}
\label{sec:variant_general}

Table~\ref{tab:variants} reports results for LoRA variants on LLaMA-3.1-8B-Instruct, with full two-backbone results in Appendix~\ref{tab:variants_all1}. When broad placement is replaced by dominant-module placement, every variant improves its average score while reducing trainable parameters to \textbf{2.3M}. The largest gains appear for LoRA+ and AdaLoRA, whose averages increase from 60.9 to 65.0 and from 62.0 to 64.7, respectively. NoRM, DoRA, and GraLoRA also improve on average despite some metric-level drops, suggesting that DomLoRA is complementary to these LoRA update designs and can be used as a plug-and-play method.

\begin{table*}[ht]
\centering
\caption{LoRA variant results on general instruction tuning (LLaMA-3.1-8B-Instruct). \colorbox{gray!15}{Shaded rows} correspond to \ding{51} in \textbf{Dom}, where adapters are inserted only into the dominant adaptation module.}
\label{tab:variants}

\footnotesize
\begin{tblr}{
  width       = \linewidth,
  colspec     = {
    Q[l,m,1.5]
    Q[c,m,0.65]
    *{4}{Q[c,m,1]}
    Q[c,m,1.35]
    *{2}{Q[c,m,1]}
    Q[c,m,0.65]
  },
  colsep      = 2.2pt,
  row{1}      = {font=\bfseries},
  row{3,5,7,9,11} = {bg=gray!15},
  cell{2}{1}  = {r=2}{},
  cell{4}{1}  = {r=2}{},
  cell{6}{1}  = {r=2}{},
  cell{8}{1}  = {r=2}{},
  cell{10}{1} = {r=2}{},
}
  \toprule
  Method                          & Dom        & Params        & MMLU          & TyDiQA         & CQA            & TruthfulQA     & GSM8K          & LogiQA         & Avg.           \\
  \midrule
  NoRM~\cite{jiang2025finetuning} & \ding{55}  & 269M          & \textbf{66.0} & \textbf{68.4}  & \textbf{76.7}  & 54.6           & 83.1           & 37.8           & 64.4           \\
                                  & \ding{51}  & \textbf{2.3M} & 65.7          & 67.8           & 75.1           & \textbf{55.0}  & \textbf{87.3}  & \textbf{41.9}  & \textbf{65.5}  \\
  \hline
  DoRA~\cite{liu2024dora}         & \ding{55}  & 323M          & \textbf{60.1} & 67.1           & \textbf{76.7}  & \textbf{48.1}  & 76.0           & 34.7           & 60.4           \\
                                  & \ding{51}  & \textbf{2.3M} & 60.0          & \textbf{68.8}  & 74.3           & 45.3           & \textbf{78.4}  & \textbf{37.8}  & \textbf{60.8}  \\
  \hline
  LoRA+~\cite{hayou2024loraplus}  & \ding{55}  & 321M          & 62.6          & \textbf{68.4}  & 70.1           & 45.7           & 81.7           & 37.0           & 60.9           \\
                                  & \ding{51}  & \textbf{2.3M} & \textbf{64.7} & 68.2           & \textbf{74.3}  & \textbf{52.8}  & \textbf{87.3}  & \textbf{42.7}  & \textbf{65.0}  \\
  \hline
  AdaLoRA~\cite{zhang2023adalora} & \ding{55}  & 321M          & 61.2          & \textbf{68.5}  & 74.2           & 46.5           & 84.7           & 37.2           & 62.0           \\
                                  & \ding{51}  & \textbf{2.3M} & \textbf{65.0} & 67.4           & \textbf{75.5}  & \textbf{52.1}  & \textbf{87.0}  & \textbf{40.9}  & \textbf{64.7}  \\
  \hline
  GraLoRA~\cite{jung2025gralora}  & \ding{55}  & 321M          & \textbf{63.9} & 67.7           & \textbf{74.8}  & 45.3           & 76.4           & \textbf{39.2}  & 61.2           \\
                                  & \ding{51}  & \textbf{2.3M} & 60.8          & \textbf{69.0}  & 73.7           & \textbf{46.1}  & \textbf{81.3}  & 37.2           & \textbf{61.3}  \\
  \bottomrule
\end{tblr}
\end{table*}

\subsubsection{Mathematical Reasoning, Code Generation, and Multi-Turn Conversation}
\label{sec:variant_tasks}

Table~\ref{tab:task_variants} reports reasoning, coding, and conversation results for LoRA variants on LLaMA-3.1-8B-Instruct, with full two-backbone results in Appendix~\ref{tab:variants_all2}. DomLoRA again improves the average score for all variants. The most significant improvements come from LoRA+ and AdaLoRA, whose averages increase by 5.5 and 5.6 points. Across variants, the Dom setting often improves MATH, HumanEval+, and MT-Bench. The main exception is GraLoRA on GSM8K, where the score drops from 83.4 to 75.7, but its overall average still improves from 57.3 to 58.6.

\begin{table*}[ht]
\centering
\caption{LoRA variant results on reasoning, coding, and conversation tasks (LLaMA-3.1-8B-Instruct). \colorbox{gray!15}{Shaded rows} correspond to \ding{51} in \textbf{Dom}, where adapters are inserted only into the dominant adaptation module.}
\label{tab:task_variants}

\footnotesize
\begin{tblr}{
  width       = \linewidth,
  colspec     = {
    Q[l,m,1.5]
    Q[c,m,1]
    Q[c,m,1.0]
    Q[c,m,1.0]
    Q[c,m,1.0]
    Q[c,m,1.35]
    Q[c,m,1.35]
    Q[c,m,1.25]
    Q[c,m,1.0]
  },
  colsep      = 2.5pt,
  row{1}      = {font=\bfseries},
  row{3,5,7,9,11} = {bg=gray!15},
  cell{2}{1}  = {r=2}{},
  cell{4}{1}  = {r=2}{},
  cell{6}{1}  = {r=2}{},
  cell{8}{1}  = {r=2}{},
  cell{10}{1} = {r=2}{},
}
  \toprule
  Method                          & Dom        & Params        & GSM8K          & MATH           & HumanEval      & HumanEval+     & MT-Bench       & Avg.           \\ 
  \midrule
  NoRM~\cite{jiang2025finetuning} & \ding{55}  & 268M          & 86.4           & 45.9           & 53.7           & 50.0           & 62.4           & 59.7           \\
                                  & \ding{51}  & \textbf{2.3M} & \textbf{87.3}  & \textbf{47.1}  & \textbf{54.9}  & \textbf{51.2}  & \textbf{67.1}  & \textbf{61.5}  \\
  \hline
  DoRA~\cite{liu2024dora}         & \ding{55}  & 323M          & 83.2           & 39.4           & 54.3           & 50.0           & 61.7           & 57.7           \\
                                  & \ding{51}  & \textbf{2.3M} & \textbf{85.9}  & \textbf{44.5}  & \textbf{55.5}  & \textbf{51.8}  & \textbf{66.4}  & \textbf{60.8}  \\
  \hline
  LoRA+~\cite{hayou2024loraplus}  & \ding{55}  & 321M          & 81.7           & 40.0           & 43.9           & 40.9           & 63.5           & 54.0           \\
                                  & \ding{51}  & \textbf{2.3M} & \textbf{85.9}  & \textbf{45.7}  & \textbf{50.0}  & \textbf{48.8}  & \textbf{67.3}  & \textbf{59.5}  \\
  \hline
  AdaLoRA~\cite{zhang2023adalora} & \ding{55}  & 321M          & 82.0           & 42.1           & 47.0           & 43.9           & 67.1           & 56.4           \\
                                  & \ding{51}  & \textbf{2.3M} & \textbf{88.6}  & \textbf{46.6}  & \textbf{54.9}  & \textbf{51.2}  & \textbf{68.4}  & \textbf{62.0}  \\
  \hline
  GraLoRA~\cite{jung2025gralora}  & \ding{55}  & 321M          & \textbf{83.4}  & 40.2           & 52.4           & 48.2           & 62.1           & 57.3           \\
                                  & \ding{51}  & \textbf{2.3M} & 75.7           & \textbf{44.4}  & \textbf{54.9}  & \textbf{53.0}  & \textbf{64.9}  & \textbf{58.6}  \\
  \bottomrule
\end{tblr}
\end{table*}

\subsection{Ablation Study (RQ3)}
\label{sec:ablation}

Table~\ref{tab:ablation} compares different layer and module choices. The dominant module achieves the best average score of 63.6 with only 2.3M trainable parameters. Moving the $\bm{W}_{\mathrm{down}}$ adapter to Layer~31 or Layer~10 reduces the average to 59.4 and 55.9. At Layer~1, replacing $\bm{W}_{\mathrm{down}}$ with $\bm{W}_{\mathrm{up}}$ or $\bm{W}_{\mathrm{gate}}$ lowers the average to 60.5 and 61.1, indicating that module type matters. FFN-all uses more parameters (6.8M) but still underperforms DomLoRA on average, suggesting that the gain mainly comes from selecting the dominant module rather than adding more projections at the same layer.

\begin{table*}[ht]
\centering
\caption{Ablation on dominant adaptation module selection (LLaMA-3.1-8B-Instruct).
\(\bm{W}_{\mathrm{down}}\) at Layer~1 is the dominant adaptation module used by DomLoRA.
FFN-all adapts all three FFN projections
\((\bm{W}_{\mathrm{up}}, \bm{W}_{\mathrm{gate}}, \bm{W}_{\mathrm{down}})\),
and Attn-all adapts all four attention projections
\((\bm{W}_{\mathrm{q}}, \bm{W}_{\mathrm{k}}, \bm{W}_{\mathrm{v}}, \bm{W}_{\mathrm{o}})\).}
\label{tab:ablation}

\footnotesize
\begin{tblr}{
  colspec    = {X[c,m]*{11}{Q[c,m]}},
  row{1}     = {font=\bfseries},
  colsep     = 2.5pt,
  row{2}     = {bg=gray!15},
}
  \toprule
  Module                          & Layer & Params        & MMLU          & TyDiQA         & CQA            & TruthfulQA     & GSM8K          & MATH           & HumanEval+     & MT-Bench       & Avg.           \\ \midrule
  $\bm{W}_{\mathrm{down}}$        & 1     & \textbf{2.3M} & 64.1          & \textbf{69.9}  & 73.5           & \textbf{54.4}  & 85.1           & 45.0           & 50.0           & 67.0           & \textbf{63.6}  \\
  $\bm{W}_{\mathrm{down}}$        & 31    & \textbf{2.3M} & 64.1          & 60.8           & 58.0           & 50.2           & 83.5           & \textbf{45.8}  & 48.2           & 64.5           & 59.4           \\
  $\bm{W}_{\mathrm{down}}$        & 10    & \textbf{2.3M} & 59.9          & 66.9           & 47.7           & 36.4           & 81.4           & 42.8           & 47.6           & 65.0           & 55.9           \\
  $\bm{W}_{\mathrm{up}}$          & 1     & \textbf{2.3M} & 61.1          & 69.8           & 73.6           & 41.4           & 85.3           & 42.9           & 43.9           & 66.2           & 60.5           \\
  $\bm{W}_{\mathrm{gate}}$        & 1     & \textbf{2.3M} & 64.0          & 69.2           & \textbf{75.3}  & 47.3           & 84.8           & 43.7           & 47.4           & 65.3           & 61.1           \\
  FFN-all                        & 1     & 6.8M          & \textbf{64.8} & 68.6           & \textbf{75.3}  & 49.6           & \textbf{86.0}  & 45.2           & \textbf{51.2}  & 64.3           & 63.1           \\
  Attn-all                       & 1     & 3.3M          & 59.2          & 66.7           & 66.3           & 46.5           & 84.8           & 43.2           & 43.9           & \textbf{67.8}  & 59.8           \\ \bottomrule
\end{tblr}
\end{table*}

\section{Conclusion}
\label{sec:conclusion}

We introduced PAGE, a gradient-based sensitivity probe for LoRA placement, and used it to reveal that adaptation sensitivity is highly concentrated at a single shallow FFN down-projection, which we call the dominant adaptation module. DomLoRA places one LoRA adapter at this module and freezes other parameters.
Across two model families and four task domains, it outperforms vanilla LoRA on average with only ${\sim}$0.7\% of its trainable parameters. It also improves representative LoRA variants, supporting DomLoRA as a plug-and-play placement method.

\paragraph{Limitations.}
DomLoRA requires an additional PAGE probe before training to identify the dominant adaptation module.
In our experiments, this probe uses 32 supervised samples and computes sample-wise gradients for candidate projection weights, which introduces extra preprocessing cost and additional GPU memory usage. Our experiments focus on dense Transformers; extending DomLoRA to MoE models, VLM models or other architectures is left for future work.

\bibliographystyle{unsrt}  
\bibliography{references}

\newpage
\appendix


\section*{Appendix Overview}
\addcontentsline{toc}{section}{Appendix Overview}

The appendix provides additional theoretical details, empirical evidence, and experimental results.

\begin{itemize}
    \item Appendix~\ref{app:page_additional} reports additional empirical
    Fisher sensitivity maps across datasets and backbone models.
    \item Appendix~\ref{app:page_derivations} provides detailed derivations for
    PAGE and its connection to full-weight empirical Fisher.
    \item Appendix~\ref{app:exp_details} lists training hyperparameters,
    variant-specific settings, and evaluation prompts.
    \item Appendix~\ref{app:full_results} provides the full benchmark results
    for all LoRA variants under standard placement and dominant module placement.
    \item Appendix~\ref{app:efficiency} reports training-time and memory-efficiency results for DomLoRA and LoRA variants.
    \item Appendix~\ref{app:rank_ablation} reports the sensitivity analysis of
    DomLoRA under different LoRA rank settings.
    \item Appendix~\ref{app:full_ft} reports the results of applying
    full-parameter updates to the dominant adaptation module.
\end{itemize}

\section{Additional PAGE Evidence}
\label{app:page_additional}

This section provides the complete PAGE sensitivity maps across datasets and backbone models.

\begin{figure}[htbp]
    \centering
    \includegraphics[width=1.0\textwidth]{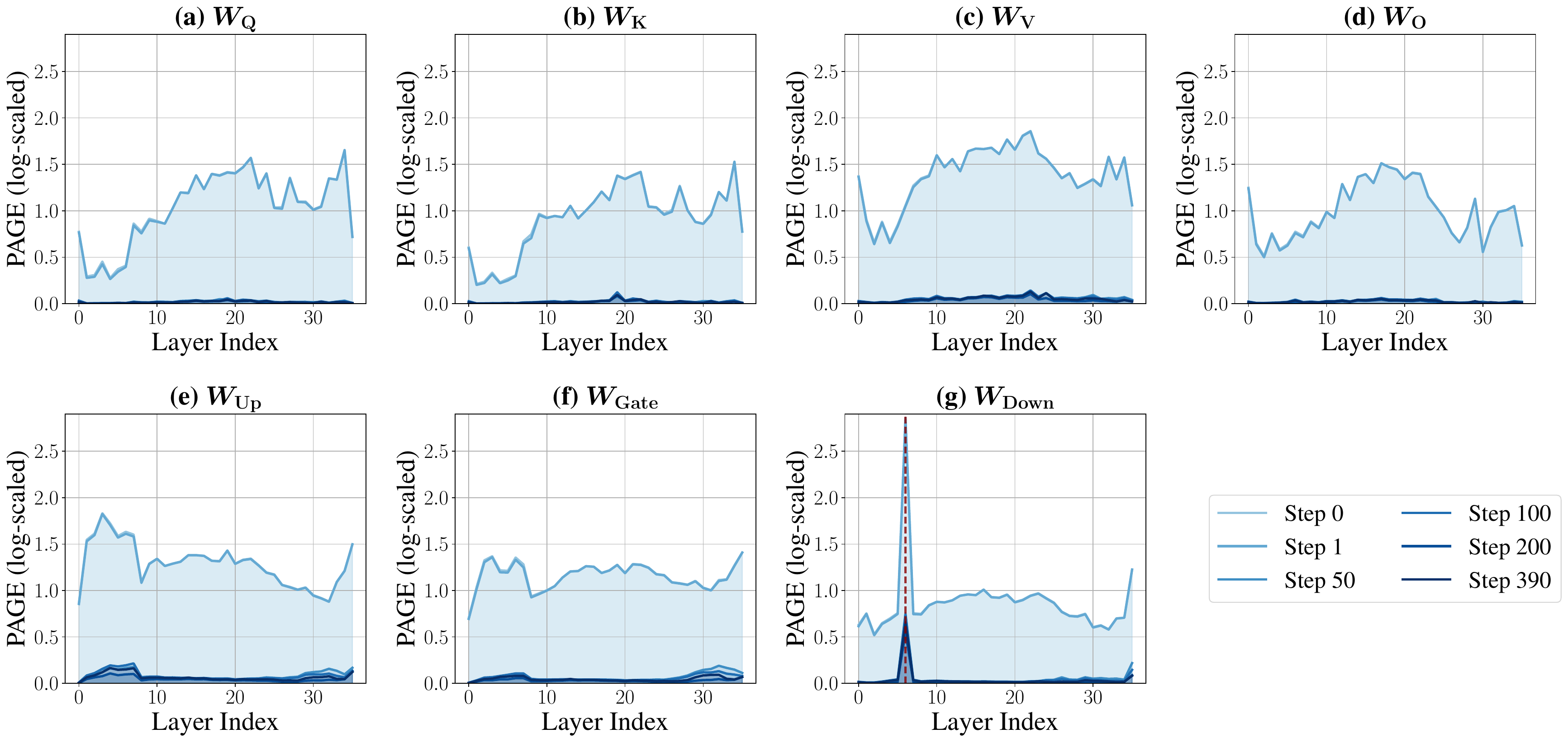}
    \caption{PAGE of all projection modules in Qwen3-8B on Tulu.}
    \label{fig:fisher_tulu_qwen}
\end{figure}

Figure~\ref{fig:fisher_tulu_qwen} shows that, for Qwen3-8B on Tulu, PAGE is sharply concentrated at the Layer~6 FFN down-projection. In particular, the $W_{\mathrm{Down}}$ panel in Figure~\ref{fig:fisher_tulu_qwen}(g) exhibits a localized peak aligned with the dashed vertical line, while the attention projections and FFN up/gate projections in Figure~\ref{fig:fisher_tulu_qwen}(a)--(f) remain substantially lower or more diffuse. The peak is already visible at Step~0 and remains aligned across later checkpoints, suggesting that the selected module is exposed by the pretrained backbone rather than created during fine-tuning.

\begin{figure}[htbp]
    \centering
    \includegraphics[width=1.0\textwidth]{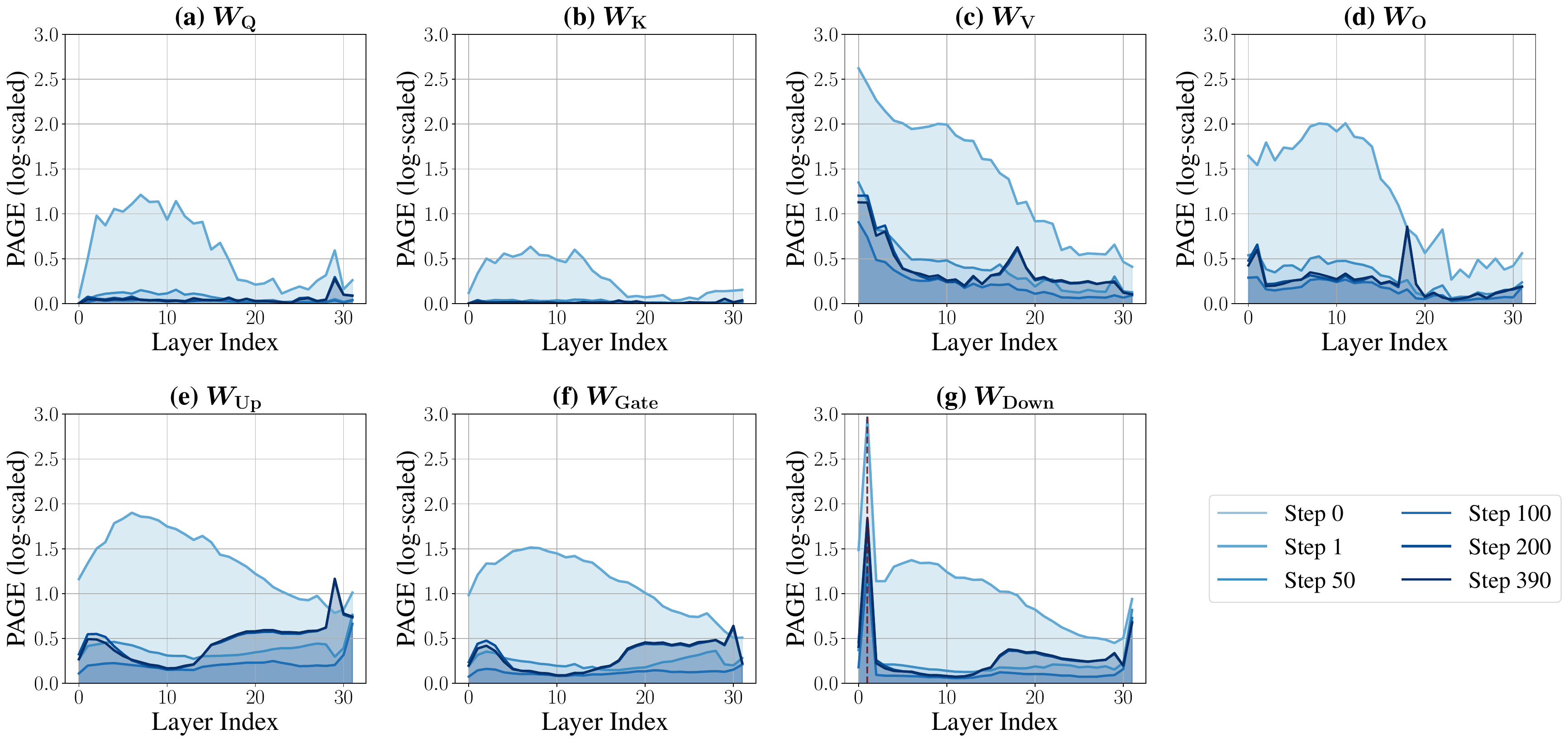}
    \caption{PAGE of all projection modules in LLaMA-3.1-8B-Instruct on Tulu.}
    \label{fig:fisher_tulu_llama3}
\end{figure}

Figure~\ref{fig:fisher_tulu_llama3} reveals a different dominant layer for LLaMA-3.1-8B-Instruct: the strongest PAGE peak appears at the Layer~1 FFN down-projection. As shown in Figure~\ref{fig:fisher_tulu_llama3}(g), this peak is highly localized, whereas several attention projections in Figure~\ref{fig:fisher_tulu_llama3}(c)--(d) show broader elevated values over shallow layers. These attention elevations are less localized and less stable than the down-projection peak, supporting the use of the FFN down-projection as the dominant placement target.

\begin{figure}[htbp]
    \centering
    \includegraphics[width=1.0\textwidth]{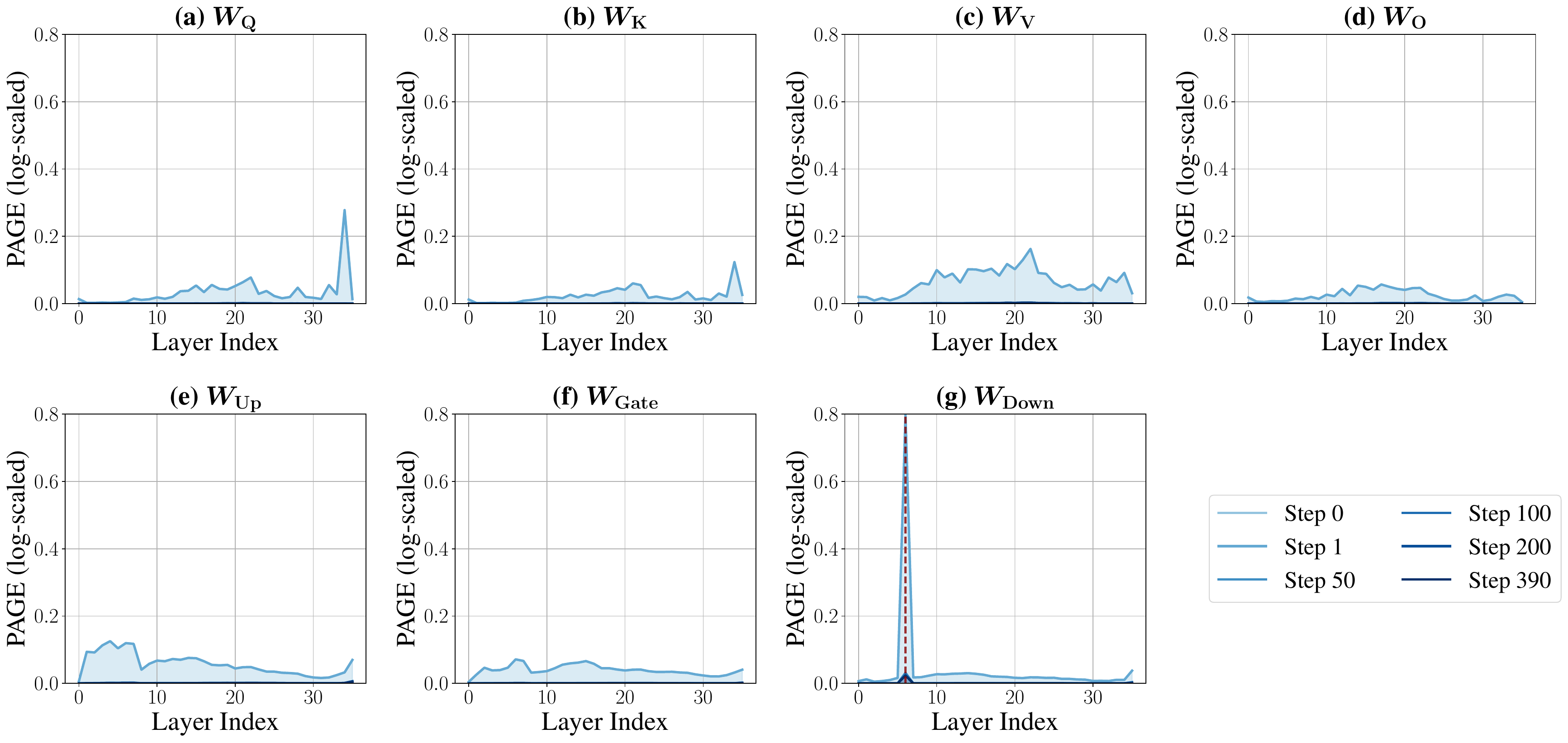}
    \caption{PAGE of all projection modules in Qwen3-8B on MetaMathQA.}
    \label{fig:fisher_metamathqa_qwen}
\end{figure}

Figure~\ref{fig:fisher_metamathqa_qwen} demonstrates that the dominant PAGE peak for Qwen3-8B on MetaMathQA again occurs at the Layer~6 FFN down-projection. The $W_{\mathrm{Down}}$ panel in Figure~\ref{fig:fisher_metamathqa_qwen}(g) matches the dominant layer observed on Tulu, while the other projections in Figure~\ref{fig:fisher_metamathqa_qwen}(a)--(f) have much smaller PAGE values. This indicates that the same Qwen3-8B module is selected even when the probe data shifts from instruction-following to mathematical reasoning.

\begin{figure}[htbp]
    \centering
    \includegraphics[width=1.0\textwidth]{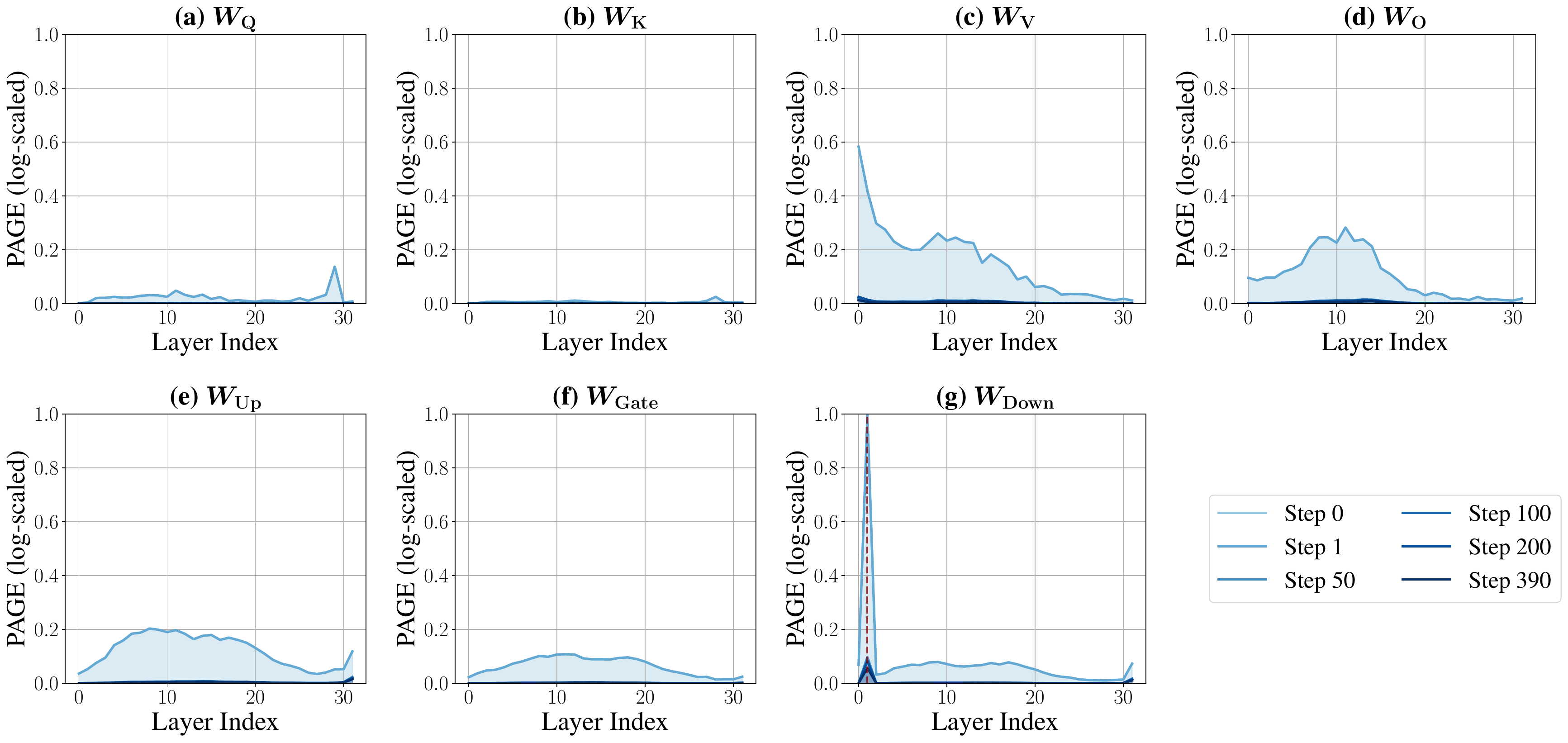}
    \caption{PAGE of all projection modules in LLaMA-3.1-8B-Instruct on MetaMathQA.}
    \label{fig:fisher_metamathqa_llama3}
\end{figure}

Figure~\ref{fig:fisher_metamathqa_llama3} reports that LLaMA-3.1-8B-Instruct again selects the Layer~1 FFN down-projection on MetaMathQA. Compared with the Tulu setting, the non-FFN projections in Figure~\ref{fig:fisher_metamathqa_llama3}(a)--(f) are weaker and smoother, making the down-projection peak in Figure~\ref{fig:fisher_metamathqa_llama3}(g) more isolated. The repeated selection of Layer~1 suggests that the dominant module is stable for this backbone rather than tied to a particular data distribution.

\begin{figure}[htbp]
    \centering
    \includegraphics[width=1.0\textwidth]{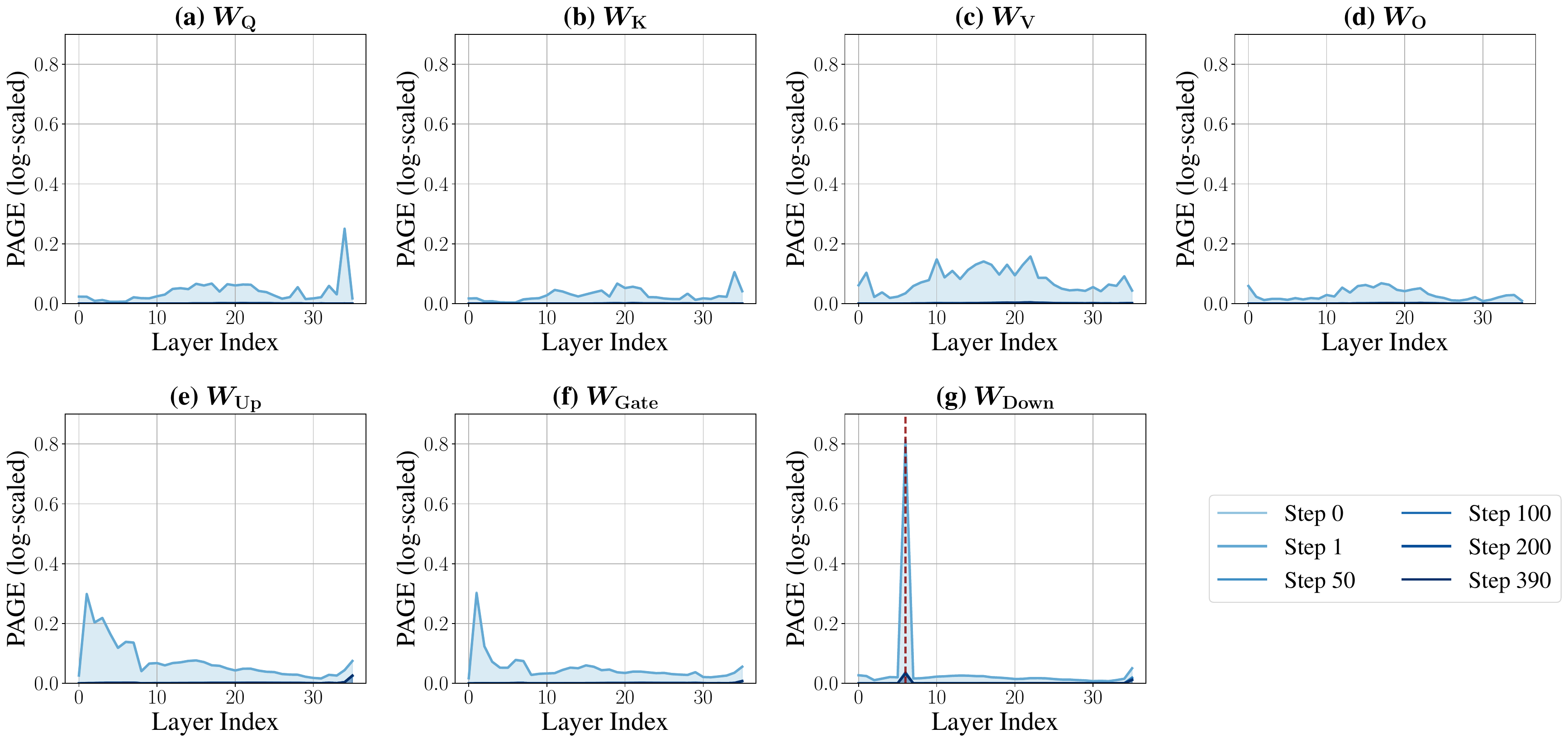}
    \caption{PAGE of all projection modules in Qwen3-8B on Magicoder.}
    \label{fig:fisher_magicoder_qwen}
\end{figure}

Figure~\ref{fig:fisher_magicoder_qwen} shows that Qwen3-8B on Magicoder still has its largest PAGE peak at the Layer~6 FFN down-projection. In Figure~\ref{fig:fisher_magicoder_qwen}(g), the peak is narrow and clearly separated from the rest of the layer-wise curve. By contrast, the attention projections and FFN up/gate projections in Figure~\ref{fig:fisher_magicoder_qwen}(a)--(f) do not form a comparable localized maximum. Thus, the same dominant module is recovered on code-generation data.

\begin{figure}[htbp]
    \centering
    \includegraphics[width=1.0\textwidth]{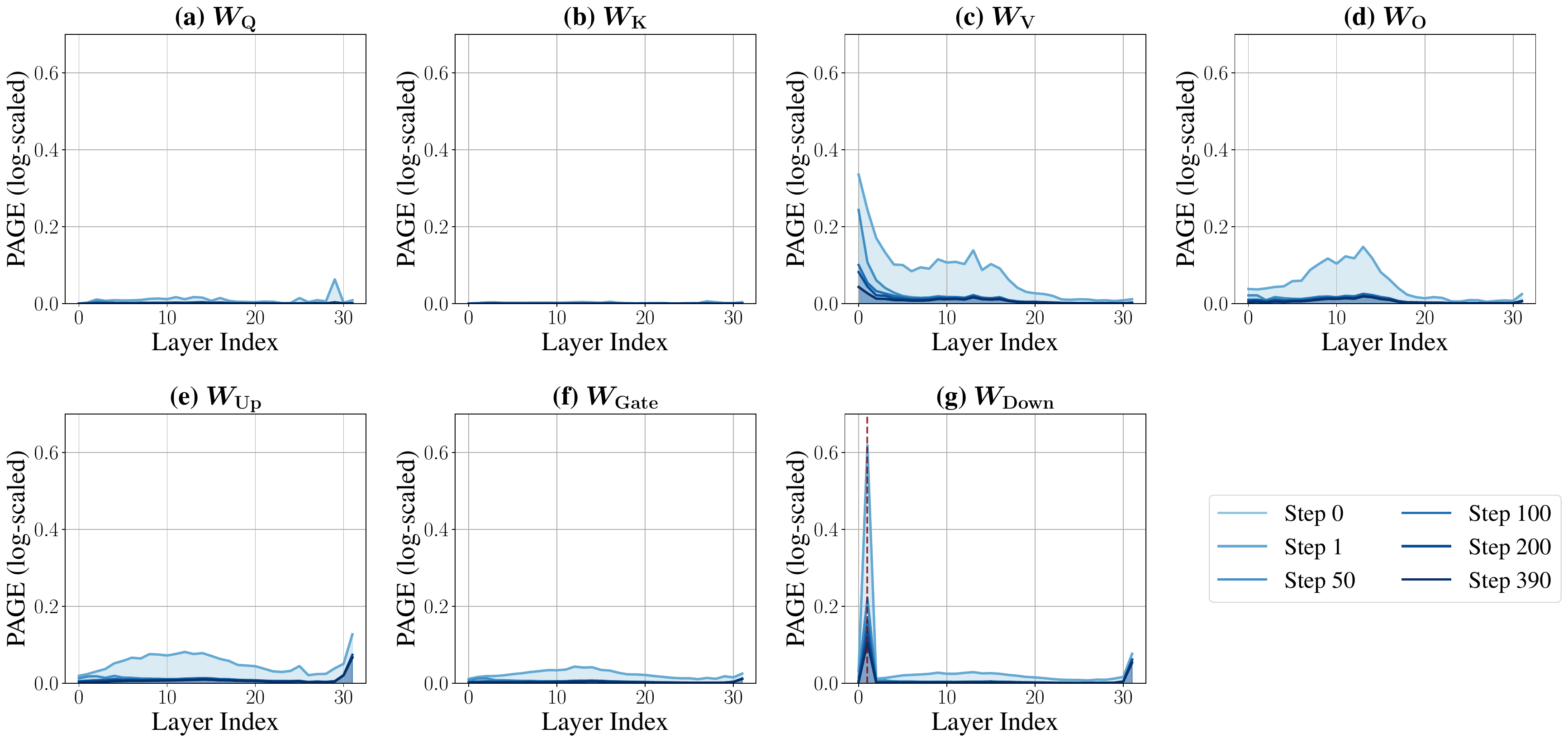}
    \caption{PAGE of all projection modules in LLaMA-3.1-8B-Instruct on Magicoder.}
    \label{fig:fisher_magicoder_llama3}
\end{figure}

Figure~\ref{fig:fisher_magicoder_llama3} reveals that LLaMA-3.1-8B-Instruct on Magicoder again concentrates PAGE at the Layer~1 FFN down-projection. Although shallow attention projections in Figure~\ref{fig:fisher_magicoder_llama3}(c)--(d) exhibit elevated values at early checkpoints, these signals are broader and decay over training. In contrast, the $W_{\mathrm{Down}}$ peak in Figure~\ref{fig:fisher_magicoder_llama3}(g) remains the most localized and stable signal, making it the consistent placement target for this backbone.

\begin{figure}[htbp]
    \centering
    \includegraphics[width=1.0\textwidth]{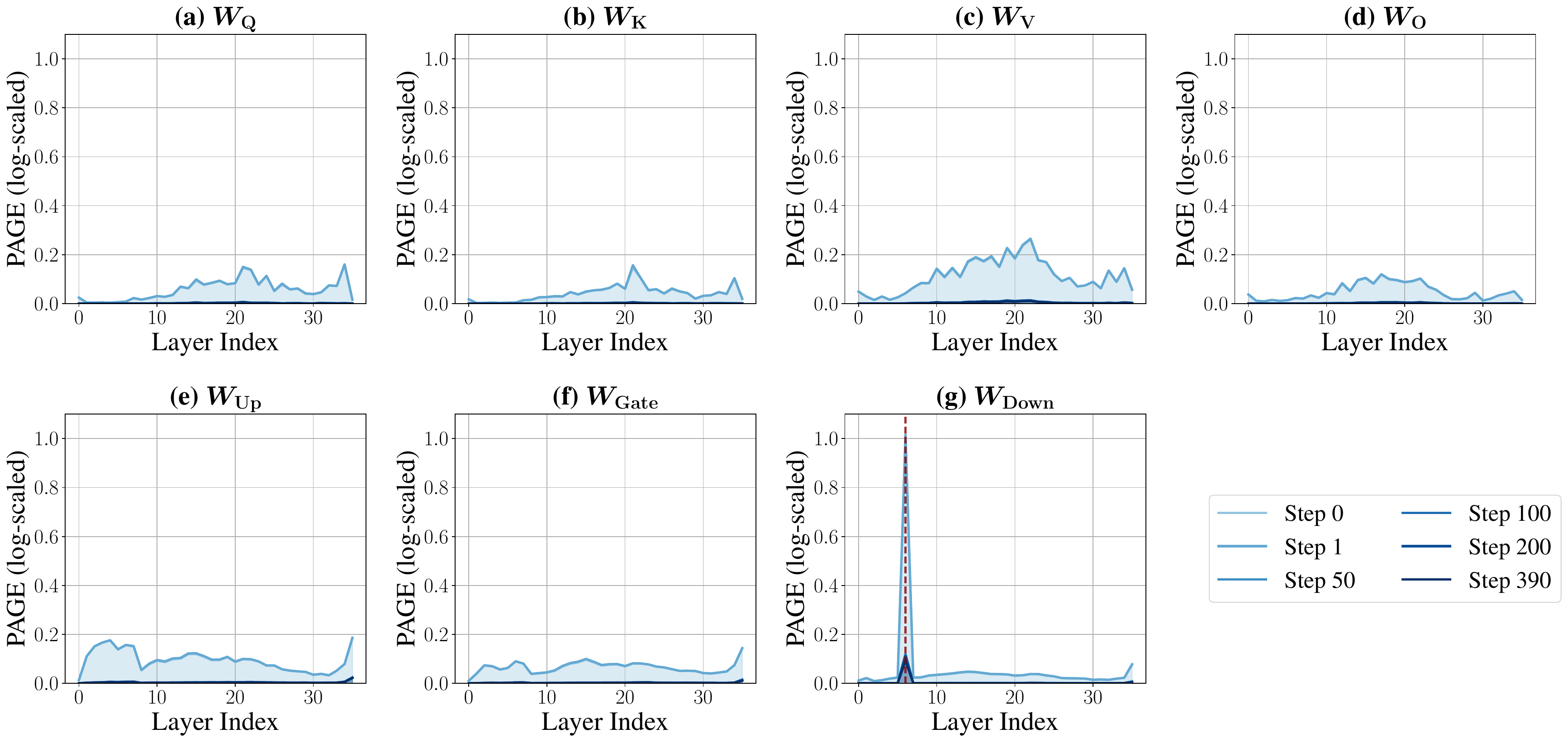}
    \caption{PAGE of all projection modules in Qwen3-8B on WizardLM.}
    \label{fig:fisher_wizardlm_qwen}
\end{figure}

Figure~\ref{fig:fisher_wizardlm_qwen} demonstrates that the strongest PAGE peak for Qwen3-8B on WizardLM again appears at the Layer~6 FFN down-projection. The $W_{\mathrm{Down}}$ panel in Figure~\ref{fig:fisher_wizardlm_qwen}(g) contains a clear localized peak, while the remaining projections in Figure~\ref{fig:fisher_wizardlm_qwen}(a)--(f) show only moderate and smooth layer-wise variations. This confirms that Qwen3-8B consistently selects the same dominant module across the evaluated datasets.

\begin{figure}[htbp]
    \centering
    \includegraphics[width=1.0\textwidth]{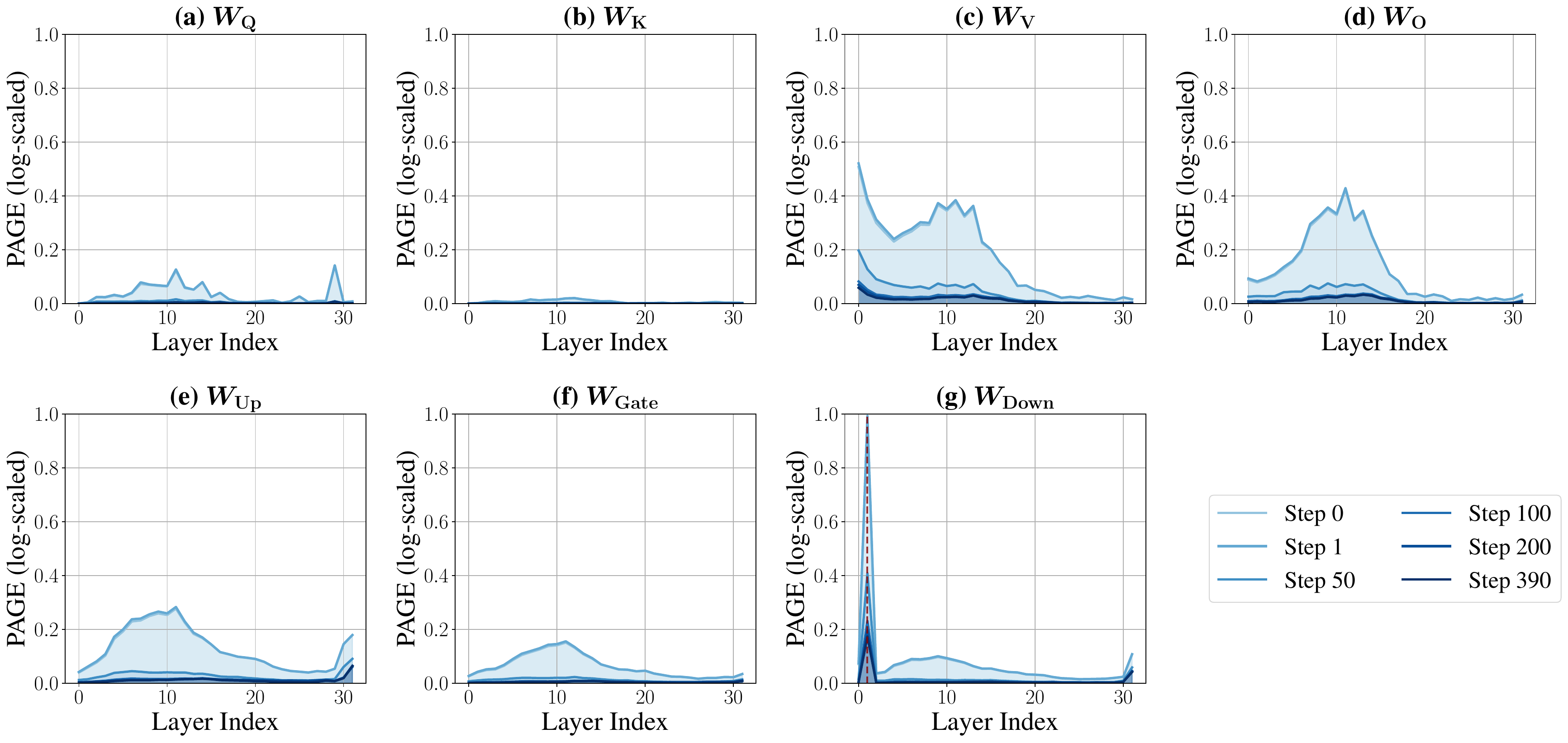}
    \caption{PAGE of all projection modules in LLaMA-3.1-8B-Instruct on WizardLM.}
    \label{fig:fisher_wizardlm_llama3}
\end{figure}

Figure~\ref{fig:fisher_wizardlm_llama3} reports that PAGE for LLaMA-3.1-8B-Instruct on WizardLM is again concentrated at the Layer~1 FFN down-projection. The attention and FFN up/gate projections in Figure~\ref{fig:fisher_wizardlm_llama3}(a)--(f) show only small secondary variations and do not approach the peak in Figure~\ref{fig:fisher_wizardlm_llama3}(g). Together with the other LLaMA figures, this indicates that the dominant module depends on the backbone but remains stable across downstream datasets.

\section{Detailed Derivations for PAGE}
\label{app:page_derivations}

This appendix provides the supporting derivations for Section~\ref{sec:emp_fisher}.
We first give an empirical Fisher interpretation of the module sensitivity.
We then prove Lemma~\ref{lem:initial_lora_grads}.
Finally, we derive the PAGE expression in Theorem~\ref{thm:page_closed_form}, including the moment identity in Eq.~\eqref{eq:expected_ATA} and the relationship between PAGE and module sensitivity.

Throughout this appendix, for matrices
\(\bm{X},\bm{Y}\in\mathbb{R}^{m\times n}\), we use the Frobenius inner product
\[
    \left\langle \bm{X},\bm{Y}\right\rangle_{\mathrm{F}}
    =
    \operatorname{tr}\!\left(\bm{X}^{\top}\bm{Y}\right),
\]
and for a matrix \(\bm{M}\in\mathbb{R}^{m\times n}\), we use the Frobenius norm
\[
    \left\|\bm{M}\right\|_{\mathrm{F}}^{2}
    =
    \operatorname{tr}\!\left(\bm{M}^{\top}\bm{M}\right)
    =
    \sum_{a=1}^{m}\sum_{b=1}^{n} M_{ab}^{2}.
\]
All gradients are evaluated at the pretrained model before LoRA optimization
begins. Unless otherwise specified, expectations over \(\bm{A}_{l,k}\) are
taken with respect to its random initialization.

\subsection{Empirical Fisher Interpretation of Module Sensitivity}
\label{app:emp_fisher_trace}

In Definition~\ref{def:emp_fisher}, the module sensitivity
\(\mathcal{S}^{\mathrm{emp}}_{l,k}\) is defined in
Eq.~\eqref{eq:sample_fisher_score} as the average squared Frobenius norm of the
sample-wise gradients in Eq.~\eqref{eq:sample_gradient}.
We now show that this quantity is exactly the trace of the empirical Fisher
block for the pretrained projection weight \(\bm{W}_{l,k}\).

Recall from Eq.~\eqref{eq:sample_gradient} that, for sample \(i\) and module
\((l,k)\), the sample-wise gradient with respect to the pretrained projection
weight is
\begin{equation}
  \bm{G}_{i,l,k}
  =
  \nabla_{\bm{W}_{l,k}}\bar{\ell}_i(\theta),
  \qquad
  \bm{W}_{l,k}\in
  \mathbb{R}^{d_{\mathrm{out},k}\times d_{\mathrm{in},k}}.
  \label{eq:app_sample_gradient_recall}
\end{equation}
Here, \(l\) indexes the Transformer layer, \(k\) indexes the projection type,
and \(\bm{G}_{i,l,k}\) has the same shape as \(\bm{W}_{l,k}\).
Eq.~\eqref{eq:app_sample_gradient_recall} gives the first-order response of
the sample loss \(\bar{\ell}_i(\theta)\) to perturbations of this projection
weight.

To write the empirical Fisher block, we need a parameter vector rather than a
matrix-shaped gradient. We therefore vectorize the matrix gradient in
Eq.~\eqref{eq:app_sample_gradient_recall}. Let \(\operatorname{vec}(\cdot)\)
denote column-wise vectorization. We define
\begin{equation}
    \bm{g}_{i,l,k}
    =
    \operatorname{vec}(\bm{G}_{i,l,k})
    \in
    \mathbb{R}^{d_{\mathrm{out},k}d_{\mathrm{in},k}}.
    \label{eq:app_vectorized_gradient}
\end{equation}
This vector contains all entries of \(\bm{G}_{i,l,k}\) and represents the
sample-wise gradient in the parameter space of \(\bm{W}_{l,k}\).

Using the vectorized gradients in Eq.~\eqref{eq:app_vectorized_gradient}, we
define the empirical Fisher block for module \((l,k)\) as
\begin{equation}
  \bm{F}^{\mathrm{emp}}_{l,k}
  =
  \frac{1}{N}
  \sum_{i=1}^{N}
  \bm{g}_{i,l,k}\bm{g}_{i,l,k}^{\top}.
  \label{eq:app_emp_fisher_block}
\end{equation}
Eq.~\eqref{eq:app_emp_fisher_block} is the empirical Fisher block associated
with the pretrained projection weight \(\bm{W}_{l,k}\). It is the average
outer product of sample-wise gradients and records the gradient energy along
directions in the parameter space of this projection weight.

To compare different modules with a scalar quantity, we take the trace of
Eq.~\eqref{eq:app_emp_fisher_block}. This gives
\begin{equation}
\begin{aligned}
  \operatorname{tr}\!\left(\bm{F}^{\mathrm{emp}}_{l,k}\right)
  &=
  \operatorname{tr}\!\left(
      \frac{1}{N}
      \sum_{i=1}^{N}
      \bm{g}_{i,l,k}\bm{g}_{i,l,k}^{\top}
  \right) \\
  &=
  \frac{1}{N}
  \sum_{i=1}^{N}
  \operatorname{tr}\!\left(
      \bm{g}_{i,l,k}\bm{g}_{i,l,k}^{\top}
  \right).
\end{aligned}
\label{eq:app_fisher_trace_expand}
\end{equation}
The second equality in Eq.~\eqref{eq:app_fisher_trace_expand} uses the
linearity of the trace.

For each sample-wise gradient vector, we simplify the remaining trace term.
Using \(\operatorname{tr}(\bm{a}\bm{a}^{\top})=\bm{a}^{\top}\bm{a}\), we obtain
\begin{equation}
  \operatorname{tr}\!\left(
      \bm{g}_{i,l,k}\bm{g}_{i,l,k}^{\top}
  \right)
  =
  \bm{g}_{i,l,k}^{\top}\bm{g}_{i,l,k}
  =
  \left\|
      \bm{g}_{i,l,k}
  \right\|_2^2.
  \label{eq:app_outer_trace_to_norm}
\end{equation}
Eq.~\eqref{eq:app_outer_trace_to_norm} turns the trace of each outer product
into the squared Euclidean norm of the corresponding sample-wise gradient
vector.

Substituting Eq.~\eqref{eq:app_outer_trace_to_norm} into
Eq.~\eqref{eq:app_fisher_trace_expand}, we get
\begin{equation}
  \operatorname{tr}\!\left(\bm{F}^{\mathrm{emp}}_{l,k}\right)
  =
  \frac{1}{N}
  \sum_{i=1}^{N}
  \left\|
      \bm{g}_{i,l,k}
  \right\|_2^2.
  \label{eq:app_fisher_trace_vector_norm}
\end{equation}
Eq.~\eqref{eq:app_fisher_trace_vector_norm} shows that the Fisher trace is the
average squared Euclidean norm of the vectorized sample-wise gradients.

Finally, we substitute
\(\bm{g}_{i,l,k}=\operatorname{vec}(\bm{G}_{i,l,k})\) from
Eq.~\eqref{eq:app_vectorized_gradient}. Since vectorization preserves the squared norm of a matrix, we obtain
\begin{equation}
    \left\|
        \bm{g}_{i,l,k}
    \right\|_2^2
    =
    \left\|
        \operatorname{vec}(\bm{G}_{i,l,k})
    \right\|_2^2
    =
    \left\|
        \bm{G}_{i,l,k}
    \right\|_{\mathrm{F}}^2.
    \label{eq:app_vec_preserves_norm}
\end{equation}
Applying Eq.~\eqref{eq:app_vec_preserves_norm} to every term in
Eq.~\eqref{eq:app_fisher_trace_vector_norm}, we obtain

\begin{equation}
\begin{aligned}
  \operatorname{tr}\!\left(\bm{F}^{\mathrm{emp}}_{l,k}\right)
  &=
  \frac{1}{N}
  \sum_{i=1}^{N}
  \left\|
      \bm{G}_{i,l,k}
  \right\|_{\mathrm{F}}^{2} \\
  &=
  \mathcal{S}^{\mathrm{emp}}_{l,k}.
\end{aligned}
\label{eq:app_fisher_trace_equals_sensitivity}
\end{equation}
The last equality follows from Eq.~\eqref{eq:sample_fisher_score} in
Definition~\ref{def:emp_fisher}. Thus, the module sensitivity
\(\mathcal{S}^{\mathrm{emp}}_{l,k}\) is exactly the trace of the empirical
Fisher block for \(\bm{W}_{l,k}\). This explains why the average squared sample-wise gradient norm in Definition~\ref{def:emp_fisher} can be interpreted as empirical Fisher sensitivity.

\subsection{Proof of Lemma~\ref{lem:initial_lora_grads}
            (Initial LoRA Gradients)}
\label{app:initial_lora_gradient}

\begin{lemma*}
Under the standard LoRA initialization
\(\bm{B}_{l,k}=\bm{0}\), with \(\bm{A}_{l,k}\) initialized by the
Kaiming-uniform rule, the initial trainable gradients of the LoRA factors are
\begin{equation}
    \nabla_{\bm{A}_{l,k}}\bar{\ell}_i = \bm{0},
    \qquad
    \nabla_{\bm{B}_{l,k}}\bar{\ell}_i
    =
    s\,\bm{G}_{i,l,k}\,\bm{A}_{l,k}^{\top},
    \label{eq:app_initial_lora_grads_statement}
\end{equation}
where \(\bm{G}_{i,l,k}\) is the sample-wise full-weight gradient defined in
Eq.~\eqref{eq:sample_gradient}.
\end{lemma*}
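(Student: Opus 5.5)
The proof is a chain-rule computation through the reparameterization in Eq.~\eqref{eq:lora_reparam}. We treat the sample loss $\bar{\ell}_i$ as a function of the effective weight $\bm{W}_{l,k}=\bm{W}^{0}_{l,k}+s\,\bm{B}_{l,k}\bm{A}_{l,k}$, with all other parameters held fixed. The factors $\bm{A}_{l,k}$ and $\bm{B}_{l,k}$ enter the loss only through this weight. At initialization $\bm{B}_{l,k}=\bm{0}$, so the effective weight equals the pretrained weight $\bm{W}^{0}_{l,k}$. Hence the gradient of $\bar{\ell}_i$ with respect to the effective weight, evaluated there, is exactly $\bm{G}_{i,l,k}$ from Eq.~\eqref{eq:sample_gradient}. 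This identification is the one point that needs care: $\bm{G}_{i,l,k}$ was defined on the original pretrained model, and we must confirm that the LoRA-augmented model computes the same function and the same forward activations at $\bm{B}_{l,k}=\bm{0}$. That holds because the added term $s\bm{B}_{l,k}\bm{A}_{l,k}$ vanishes identically.

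Next we compute the first-order variation. Perturb $\bm{B}_{l,k}\to\bm{B}_{l,k}+\bm{\Delta}_B$ and $\bm{A}_{l,k}\to\bm{A}_{l,k}+\bm{\Delta}_A$. The effective weight changes by $s(\bm{\Delta}_B\bm{A}_{l,k}+\bm{B}_{l,k}\bm{\Delta}_A)$ plus second-order terms. The first-order change in the loss is therefore
\[
\langle \bm{G}_{i,l,k},\, s\bm{\Delta}_B\bm{A}_{l,k}\rangle_{\mathrm{F}}+\langle \bm{G}_{i,l,k},\, s\bm{B}_{l,k}\bm{\Delta}_A\rangle_{\mathrm{F}}.
\]
We then apply the cyclic property of the trace to rewrite each term as a Frobenius inner product with the perturbation:
\[
\operatorname{tr}(\bm{G}^{\top}\bm{\Delta}_B\bm{A})=\langle \bm{G}\bm{A}^{\top},\bm{\Delta}_B\rangle_{\mathrm{F}},
\qquad
\operatorname{tr}(\bm{G}^{\top}\bm{B}\bm{\Delta}_A)=\langle \bm{B}^{\top}\bm{G},\bm{\Delta}_A\rangle_{\mathrm{F}}.
\]
Reading off the gradients gives $\nabla_{\bm{B}_{l,k}}\bar{\ell}_i=s\,\bm{G}_{i,l,k}\bm{A}_{l,k}^{\top}$ and $\nabla_{\bm{A}_{l,k}}\bar{\ell}_i=s\,\bm{B}_{l,k}^{\top}\bm{G}_{i,l,k}$.

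Finally, substituting $\bm{B}_{l,k}=\bm{0}$ makes the $\bm{A}$-gradient zero, while the $\bm{B}$-gradient keeps its form. The Kaiming-uniform initialization of $\bm{A}_{l,k}$ plays no role beyond fixing $\bm{A}_{l,k}$ as some matrix; its distribution only matters later, in Theorem~\ref{thm:page_closed_form}.
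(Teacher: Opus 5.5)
Your proof is correct and matches the paper's argument step for step. Like the paper, you differentiate the reparameterization, pair the result with $\bm{G}_{i,l,k}$ via the Frobenius inner product, use cyclic invariance of the trace to read off $\nabla_{\bm{B}_{l,k}}\bar{\ell}_i = s\,\bm{G}_{i,l,k}\bm{A}_{l,k}^{\top}$ and $\nabla_{\bm{A}_{l,k}}\bar{\ell}_i = s\,\bm{B}_{l,k}^{\top}\bm{G}_{i,l,k}$, and then set $\bm{B}_{l,k}=\bm{0}$. You also note explicitly that $\bm{B}_{l,k}=\bm{0}$ makes the effective weight equal $\bm{W}^{0}_{l,k}$, so the gradient there really is the pretrained-model $\bm{G}_{i,l,k}$; the paper leaves this implicit, and stating it is a genuine improvement.
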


\begin{proof}
We prove the lemma in two steps. First, we derive the gradients with respect to the LoRA factors for arbitrary \(\bm{A}_{l,k}\) and \(\bm{B}_{l,k}\). Second, we substitute the standard LoRA initialization \(\bm{B}_{l,k}=\bm{0}\).

\paragraph{Step 1: Derive the general LoRA-factor gradients.}
Recall from Eq.~\eqref{eq:lora_reparam} that, for module \((l,k)\), the LoRA-augmented effective weight is
\begin{equation}
    \bm{W}_{l,k}
    =
    \bm{W}^{0}_{l,k}
    +
    s\,\bm{B}_{l,k}\bm{A}_{l,k},
    \qquad
    s=\alpha/r.
    \label{eq:app_lora_reparam}
\end{equation}
Here, \(\bm{W}^{0}_{l,k}\in\mathbb{R}^{d_{\mathrm{out},k}\times d_{\mathrm{in},k}}\)
is the frozen pretrained projection weight,
\(\bm{B}_{l,k}\in\mathbb{R}^{d_{\mathrm{out},k}\times r}\) and
\(\bm{A}_{l,k}\in\mathbb{R}^{r\times d_{\mathrm{in},k}}\) are trainable LoRA
factors, \(r\) is the LoRA rank, and \(s=\alpha/r\) is the LoRA scaling
coefficient. This equation shows that the loss depends on \(\bm{A}_{l,k}\) and
\(\bm{B}_{l,k}\) only through the low-rank update
\(s\bm{B}_{l,k}\bm{A}_{l,k}\).

To compute the gradients of the LoRA factors, we track how a small change in
\(\bm{A}_{l,k}\) or \(\bm{B}_{l,k}\) changes the effective weight
\(\bm{W}_{l,k}\), and then how this change affects the loss. We first take the
differential of Eq.~\eqref{eq:app_lora_reparam}. Since
\(\bm{W}^{0}_{l,k}\) is frozen, its differential is zero. Therefore, we obtain
\begin{equation}
    \mathrm{d}\bm{W}_{l,k}
    =
    s\,\mathrm{d}\bm{B}_{l,k}\,\bm{A}_{l,k}
    +
    s\,\bm{B}_{l,k}\,\mathrm{d}\bm{A}_{l,k}.
    \label{eq:app_lora_weight_differential}
\end{equation}
Eq.~\eqref{eq:app_lora_weight_differential} separates the change in effective weight
 into the part caused by \(\mathrm{d}\bm{B}_{l,k}\) and the part caused
by \(\mathrm{d}\bm{A}_{l,k}\).

We next relate this change to the sample loss. By the
definition of \(\bm{G}_{i,l,k}\) in Eq.~\eqref{eq:sample_gradient}, the
first-order loss differential for sample \(i\) is
\begin{equation}
    \mathrm{d}\bar{\ell}_i
    =
    \left\langle
        \bm{G}_{i,l,k},\;
        \mathrm{d}\bm{W}_{l,k}
    \right\rangle_{\mathrm{F}}.
    \label{eq:app_loss_differential_dense}
\end{equation}
Eq.~\eqref{eq:app_loss_differential_dense} states that, to first order, the
loss change is the Frobenius inner product between the full-weight gradient and
the change in effective weight.

Substituting Eq.~\eqref{eq:app_lora_weight_differential} into
Eq.~\eqref{eq:app_loss_differential_dense}, we get
\begin{equation}
\begin{aligned}
    \mathrm{d}\bar{\ell}_i
    &=
    s
    \left\langle
        \bm{G}_{i,l,k},\;
        \mathrm{d}\bm{B}_{l,k}\,\bm{A}_{l,k}
    \right\rangle_{\mathrm{F}}
    +
    s
    \left\langle
        \bm{G}_{i,l,k},\;
        \bm{B}_{l,k}\,\mathrm{d}\bm{A}_{l,k}
    \right\rangle_{\mathrm{F}}.
\end{aligned}
\label{eq:app_loss_differential_split}
\end{equation}
Eq.~\eqref{eq:app_loss_differential_split} expresses the same loss differential
as the sum of a \(\mathrm{d}\bm{B}_{l,k}\) term and a
\(\mathrm{d}\bm{A}_{l,k}\) term. We next rewrite these two terms so that
\(\mathrm{d}\bm{B}_{l,k}\) and \(\mathrm{d}\bm{A}_{l,k}\) appear as the second
arguments of Frobenius inner products.

For the \(\mathrm{d}\bm{B}_{l,k}\) term, we expand the Frobenius inner product
using its trace form and move factors by cyclic invariance of the trace:
\begin{equation}
\begin{aligned}
    \left\langle
        \bm{G}_{i,l,k},\;
        \mathrm{d}\bm{B}_{l,k}\bm{A}_{l,k}
    \right\rangle_{\mathrm{F}}
    &=
    \operatorname{tr}\!\left(
        \bm{G}_{i,l,k}^{\top}
        \mathrm{d}\bm{B}_{l,k}\bm{A}_{l,k}
    \right) \\
    &=
    \operatorname{tr}\!\left(
        \bm{A}_{l,k}
        \bm{G}_{i,l,k}^{\top}
        \mathrm{d}\bm{B}_{l,k}
    \right) \\
    &=
    \operatorname{tr}\!\left(
        \left(
            \bm{G}_{i,l,k}\bm{A}_{l,k}^{\top}
        \right)^{\top}
        \mathrm{d}\bm{B}_{l,k}
    \right) \\
    &=
    \left\langle
        \bm{G}_{i,l,k}\bm{A}_{l,k}^{\top},\;
        \mathrm{d}\bm{B}_{l,k}
    \right\rangle_{\mathrm{F}}.
\end{aligned}
\label{eq:app_db_term}
\end{equation}
Eq.~\eqref{eq:app_db_term} identifies the coefficient of
\(\mathrm{d}\bm{B}_{l,k}\) as \(\bm{G}_{i,l,k}\bm{A}_{l,k}^{\top}\).

Similarly, for the \(\mathrm{d}\bm{A}_{l,k}\) term, we again use the trace form
of the Frobenius inner product:
\begin{equation}
\begin{aligned}
    \left\langle
        \bm{G}_{i,l,k},\;
        \bm{B}_{l,k}\mathrm{d}\bm{A}_{l,k}
    \right\rangle_{\mathrm{F}}
    &=
    \operatorname{tr}\!\left(
        \bm{G}_{i,l,k}^{\top}
        \bm{B}_{l,k}\mathrm{d}\bm{A}_{l,k}
    \right) \\
    &=
    \operatorname{tr}\!\left(
        \left(
            \bm{B}_{l,k}^{\top}\bm{G}_{i,l,k}
        \right)^{\top}
        \mathrm{d}\bm{A}_{l,k}
    \right) \\
    &=
    \left\langle
        \bm{B}_{l,k}^{\top}\bm{G}_{i,l,k},\;
        \mathrm{d}\bm{A}_{l,k}
    \right\rangle_{\mathrm{F}}.
\end{aligned}
\label{eq:app_da_term}
\end{equation}
Eq.~\eqref{eq:app_da_term} identifies the coefficient of
\(\mathrm{d}\bm{A}_{l,k}\) as \(\bm{B}_{l,k}^{\top}\bm{G}_{i,l,k}\).

Substituting Eq.~\eqref{eq:app_db_term} and Eq.~\eqref{eq:app_da_term} into
Eq.~\eqref{eq:app_loss_differential_split}, we obtain
\begin{equation}
    \mathrm{d}\bar{\ell}_i
    =
    \left\langle
        s\,\bm{G}_{i,l,k}\bm{A}_{l,k}^{\top},\;
        \mathrm{d}\bm{B}_{l,k}
    \right\rangle_{\mathrm{F}}
    +
    \left\langle
        s\,\bm{B}_{l,k}^{\top}\bm{G}_{i,l,k},\;
        \mathrm{d}\bm{A}_{l,k}
    \right\rangle_{\mathrm{F}}.
\label{eq:app_lora_factor_differential}
\end{equation}
This equation expresses the loss differential directly in terms of the
perturbations of the two LoRA factors.

By the definition of gradients with respect to \(\bm{B}_{l,k}\) and
\(\bm{A}_{l,k}\), the same differential can also be written as
\begin{equation}
    \mathrm{d}\bar{\ell}_i
    =
    \left\langle
        \nabla_{\bm{B}_{l,k}}\bar{\ell}_i,\;
        \mathrm{d}\bm{B}_{l,k}
    \right\rangle_{\mathrm{F}}
    +
    \left\langle
        \nabla_{\bm{A}_{l,k}}\bar{\ell}_i,\;
        \mathrm{d}\bm{A}_{l,k}
    \right\rangle_{\mathrm{F}}.
    \label{eq:app_lora_gradient_definition}
\end{equation}
Comparing Eq.~\eqref{eq:app_lora_factor_differential} with
Eq.~\eqref{eq:app_lora_gradient_definition}, we obtain the general LoRA factor
gradients:
\begin{equation}
    \nabla_{\bm{B}_{l,k}}\bar{\ell}_i
    =
    s\,\bm{G}_{i,l,k}\bm{A}_{l,k}^{\top},
    \qquad
    \nabla_{\bm{A}_{l,k}}\bar{\ell}_i
    =
    s\,\bm{B}_{l,k}^{\top}\bm{G}_{i,l,k}.
    \label{eq:app_general_lora_factor_gradients}
\end{equation}
Eq.~\eqref{eq:app_general_lora_factor_gradients} gives the gradients for
arbitrary values of the LoRA factors.

\paragraph{Step 2: Apply the standard LoRA initialization.}
We now specialize Eq.~\eqref{eq:app_general_lora_factor_gradients} to the
standard LoRA initialization. At initialization, \(\bm{B}_{l,k}=\bm{0}\), while
\(\bm{A}_{l,k}\) is a Kaiming-uniform initialized matrix. Substituting
\(\bm{B}_{l,k}=\bm{0}\) into Eq.~\eqref{eq:app_general_lora_factor_gradients}
gives
\begin{equation}
    \nabla_{\bm{A}_{l,k}}\bar{\ell}_i
    =
    s\,\bm{0}^{\top}\bm{G}_{i,l,k}
    =
    \bm{0},
    \qquad
    \nabla_{\bm{B}_{l,k}}\bar{\ell}_i
    =
    s\,\bm{G}_{i,l,k}\bm{A}_{l,k}^{\top}.
    \label{eq:app_initial_lora_grads}
\end{equation}
Thus, only \(\bm{B}_{l,k}\) receives a nonzero gradient at initialization, and
this gradient is the full-weight gradient \(\bm{G}_{i,l,k}\) projected through
the initialized factor \(\bm{A}_{l,k}\). This recovers
Eq.~\eqref{eq:initial_lora_grads} and proves Lemma~\ref{lem:initial_lora_grads}.
\end{proof}

\subsection{Derivation of the PAGE Expression}
\label{app:page_expression}
\begin{theorem*}
\label{thm:app_page_closed_form}
Let the entries of $\bm{A}_{l,k}\in\mathbb{R}^{r\times d_{\mathrm{in},k}}$
be drawn i.i.d.\ from
$\mathcal{U}\!\bigl(
    -1/\sqrt{d_{\mathrm{in},k}},\;
     1/\sqrt{d_{\mathrm{in},k}}
\bigr)$.
Then
\begin{equation}
    \mathbb{E}_{\bm{A}_{l,k}}
    \!\left[
        \bm{A}_{l,k}^{\top}\bm{A}_{l,k}
    \right]
    =
    \frac{r}{3\,d_{\mathrm{in},k}}\,
    \bm{I}_{d_{\mathrm{in},k}},
    \label{eq:app_expected_ATA}
\end{equation}
where $\bm{I}_{d_{\mathrm{in},k}}$ is a
$d_{\mathrm{in},k}\times d_{\mathrm{in},k}$ identity matrix.
Consequently, PAGE can be written as:
\begin{equation}
    \mathrm{PAGE}_{l,k}
    =
    \frac{s^{2}\,r}{3\,d_{\mathrm{in},k}}\;
    \mathcal{S}^{\mathrm{emp}}_{l,k}.
    \label{eq:app_page_closed_form_statement}
\end{equation}
\end{theorem*}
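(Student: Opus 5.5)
The proof has two parts: first the second-moment identity in Eq.~\eqref{eq:app_expected_ATA}, then substitution into the trace form of PAGE in Eq.~\eqref{eq:page_trace_form}.

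\textbf{Step 1: the moment identity.} Write $\bm{A}=\bm{A}_{l,k}$ and $d=d_{\mathrm{in},k}$, and compute entrywise:
\[
\bigl(\bm{A}^{\top}\bm{A}\bigr)_{bc}=\sum_{a=1}^{r}A_{ab}A_{ac}.
\]
\begin{itemize}
\item For $b\neq c$, the entries $A_{ab}$ and $A_{ac}$ are independent and have mean zero, because the uniform law on $(-1/\sqrt{d},1/\sqrt{d})$ is symmetric. Hence each off-diagonal expectation vanishes.
\item For $b=c$, we need $\mathbb{E}[A_{ab}^2]$. A variable uniform on $(-h,h)$ has second moment $h^2/3$. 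With $h=1/\sqrt{d}$ this gives $1/(3d)$.
\end{itemize}
Summing over the $r$ rows gives diagonal entries $r/(3d)$, which is Eq.~\eqref{eq:app_expected_ATA}.

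\textbf{Step 2: substitution into PAGE.} Start from Eq.~\eqref{eq:page_trace_form}. The matrix $\bm{G}_{i,l,k}$ is computed on the pretrained model, so it is deterministic with respect to $\bm{A}$. The trace is linear, so expectation passes inside it:
\[
\mathbb{E}\bigl[\operatorname{tr}(\bm{G}^{\top}\bm{G}\,\bm{A}^{\top}\bm{A})\bigr]=\operatorname{tr}\bigl(\bm{G}^{\top}\bm{G}\,\mathbb{E}[\bm{A}^{\top}\bm{A}]\bigr).
\]
Applying Step 1, the right-hand side equals
\[
\frac{r}{3d}\operatorname{tr}(\bm{G}^{\top}\bm{G})=\frac{r}{3d}\|\bm{G}\|_{\mathrm{F}}^2.
\]
Summing over $i$ with the prefactor $s^2/N$ and using Definition~\ref{def:emp_fisher} yields Eq.~\eqref{eq:app_page_closed_form_statement}.

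For completeness, I will also justify Eq.~\eqref{eq:page_trace_form} itself. By Lemma~\ref{lem:initial_lora_grads}, $\nabla_{\bm{B}}\bar\ell_i=s\bm{G}\bm{A}^{\top}$. Then
\[
\|s\bm{G}\bm{A}^{\top}\|_{\mathrm{F}}^2=s^2\operatorname{tr}(\bm{A}\bm{G}^{\top}\bm{G}\bm{A}^{\top})=s^2\operatorname{tr}(\bm{G}^{\top}\bm{G}\bm{A}^{\top}\bm{A})
\]
by cyclicity of the trace. The finite sum over $i$ commutes with expectation.

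There is no real obstacle here. The only points needing care are the uniform second moment, where the factor $1/3$ comes from, and the cross terms, which vanish by independence and zero mean. One caveat is that PyTorch's Kaiming-uniform initialization with $a=\sqrt{5}$ gives exactly this bound $1/\sqrt{d}$, which is why the theorem simply assumes it as a hypothesis.
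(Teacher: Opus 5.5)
Your proposal is correct and follows essentially the same route as the paper: compute $\mathbb{E}[\bm{A}^{\top}\bm{A}]$ entrywise, with off-diagonal entries vanishing by independence and zero mean and diagonal entries equal to $r/(3d_{\mathrm{in},k})$, then pass the expectation inside the trace form of PAGE and average $\|\bm{G}_{i,l,k}\|_{\mathrm{F}}^{2}$ over samples. Your cyclic-trace justification of Eq.~\eqref{eq:page_trace_form}, which the paper only asserts in the main text, is a correct small addition.
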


\begin{proof}
We prove Theorem~\ref{thm:page_closed_form} in two steps. First, we compute
the initialization moment
\(\mathbb{E}_{\bm{A}_{l,k}}[\bm{A}_{l,k}^{\top}\bm{A}_{l,k}]\), which proves
Eq.~\eqref{eq:app_expected_ATA}. Second, we substitute this moment into the trace
form of PAGE in Eq.~\eqref{eq:page_trace_form} to derive
Eq.~\eqref{eq:app_page_closed_form_statement}.

\paragraph{Step 1: Compute the initialization moment.}
We first prove Eq.~\eqref{eq:app_expected_ATA}. Let
\(\bm{A}_{l,k}\in\mathbb{R}^{r\times d_{\mathrm{in},k}}\) be the LoRA factor
in Theorem~\ref{thm:page_closed_form}. For clarity, let
\[
    A_{qj}
    =
    (\bm{A}_{l,k})_{qj},
    \qquad
    1\le q\le r,\quad
    1\le j\le d_{\mathrm{in},k}.
\]
Here, \(q\) indexes the rank dimension of the LoRA factor, and \(j\) indexes
the input dimension of the projection weight. Under the initialization
assumption in Theorem~\ref{thm:page_closed_form}, the entries \(A_{qj}\) are
independent and identically distributed as
\begin{equation}
    A_{qj}
    \sim
    \mathcal{U}\!\left(
        -\frac{1}{\sqrt{d_{\mathrm{in},k}}},
         \frac{1}{\sqrt{d_{\mathrm{in},k}}}
    \right).
    \label{eq:app_A_entry_distribution}
\end{equation}
From the uniform distribution in Eq.~\eqref{eq:app_A_entry_distribution}, we
compute the mean and second moment of each entry as
\begin{equation}
    \mathbb{E}_{\bm{A}_{l,k}}[A_{qj}]
    =
    0,
    \qquad
    \mathbb{E}_{\bm{A}_{l,k}}[A_{qj}^{2}]
    =
    \frac{1}{3\,d_{\mathrm{in},k}}.
    \label{eq:app_A_entry_moments}
\end{equation}
Eq.~\eqref{eq:app_A_entry_moments} provides the entry-wise moments needed for
the matrix expectation. We compute this expectation by evaluating the
\((j,j')\)-th entry of \(\bm{A}_{l,k}^{\top}\bm{A}_{l,k}\):

\begin{equation}
    \left(
        \bm{A}_{l,k}^{\top}\bm{A}_{l,k}
    \right)_{jj'}
    =
    \sum_{q=1}^{r}
    A_{qj}A_{qj'},
    \qquad
    1\le j,j'\le d_{\mathrm{in},k}.
    \label{eq:app_ATA_entry}
\end{equation}
The expectation in Eq.~\eqref{eq:app_ATA_entry} depends on whether the two
column indices \(j\) and \(j'\) are equal. When \(j\neq j'\), the product
\(A_{qj}A_{qj'}\) contains two independent zero-mean entries. When \(j=j'\), it
becomes the squared entry \(A_{qj}^{2}\). We therefore treat the off-diagonal
and diagonal cases separately.

For the off-diagonal case \(j\neq j'\), taking expectation on both sides of
Eq.~\eqref{eq:app_ATA_entry} gives
\begin{equation}
\begin{aligned}
    \mathbb{E}_{\bm{A}_{l,k}}
    \!\left[
        \left(
            \bm{A}_{l,k}^{\top}\bm{A}_{l,k}
        \right)_{jj'}
    \right]
    &=
    \sum_{q=1}^{r}
    \mathbb{E}_{\bm{A}_{l,k}}
    \!\left[
        A_{qj}A_{qj'}
    \right] \\
    &=
    \sum_{q=1}^{r}
    \mathbb{E}_{\bm{A}_{l,k}}[A_{qj}]
    \mathbb{E}_{\bm{A}_{l,k}}[A_{qj'}] \\
    &=
    0.
\end{aligned}
\label{eq:app_ATA_off_diagonal}
\end{equation}
The second equality uses the independence of \(A_{qj}\) and \(A_{qj'}\), and
the last equality uses the zero mean in Eq.~\eqref{eq:app_A_entry_moments}.
Thus, every off-diagonal entry of the expected matrix is zero.

For the diagonal case \(j=j'\), Eq.~\eqref{eq:app_ATA_entry} becomes
\begin{equation}
    \left(
        \bm{A}_{l,k}^{\top}\bm{A}_{l,k}
    \right)_{jj}
    =
    \sum_{q=1}^{r}
    A_{qj}^{2}.
    \label{eq:app_ATA_diagonal_entry}
\end{equation}
Taking expectation on both sides of Eq.~\eqref{eq:app_ATA_diagonal_entry} and
using the second moment in Eq.~\eqref{eq:app_A_entry_moments}, we obtain
\begin{equation}
\begin{aligned}
    \mathbb{E}_{\bm{A}_{l,k}}
    \!\left[
        \left(
            \bm{A}_{l,k}^{\top}\bm{A}_{l,k}
        \right)_{jj}
    \right]
    &=
    \sum_{q=1}^{r}
    \mathbb{E}_{\bm{A}_{l,k}}[A_{qj}^{2}] \\
    &=
    \sum_{q=1}^{r}
    \frac{1}{3\,d_{\mathrm{in},k}} \\
    &=
    \frac{r}{3\,d_{\mathrm{in},k}}.
\end{aligned}
\label{eq:app_ATA_diagonal}
\end{equation}
Thus, every diagonal entry of the expected matrix equals
\(r/(3d_{\mathrm{in},k})\).

The off-diagonal result in Eq.~\eqref{eq:app_ATA_off_diagonal} shows that all
off-diagonal entries are zero, while the diagonal entries result in
Eq.~\eqref{eq:app_ATA_diagonal} shows that every diagonal entry equals
\(r/(3d_{\mathrm{in},k})\). Therefore, the expected matrix is a scalar multiple
of the identity matrix:
\begin{equation}
    \mathbb{E}_{\bm{A}_{l,k}}
    \!\left[
        \bm{A}_{l,k}^{\top}\bm{A}_{l,k}
    \right]
    =
    \frac{r}{3\,d_{\mathrm{in},k}}\,
    \bm{I}_{d_{\mathrm{in},k}}.
    \label{eq:app_expected_ATA1}
\end{equation}
Eq.~\eqref{eq:app_expected_ATA1} proves Eq.~\eqref{eq:expected_ATA} in
Theorem~\ref{thm:page_closed_form}.

\paragraph{Step 2: Substitute the initialization moment into PAGE.}
We now derive Eq.~\eqref{eq:page_closed_form}. Recall from Eq.~\eqref{eq:page_trace_form} that PAGE can be written as
\begin{equation}
    \mathrm{PAGE}_{l,k}
    =
    \frac{s^{2}}{N}
    \sum_{i=1}^{N}
    \mathbb{E}_{\bm{A}_{l,k}}
    \!\left[
        \operatorname{tr}
        \!\left(
            \bm{G}_{i,l,k}^{\top}\bm{G}_{i,l,k}\;
            \bm{A}_{l,k}^{\top}\bm{A}_{l,k}
        \right)
    \right].
    \label{eq:app_page_start}
\end{equation}
Here, \(\bm{G}_{i,l,k}\) is the sample-wise full-weight gradient from
Eq.~\eqref{eq:sample_gradient}, and the expectation is taken over the random
initialization of \(\bm{A}_{l,k}\).

Since \(\bm{G}_{i,l,k}\) is computed on the pretrained model before LoRA
optimization begins, it is fixed with respect to the random initialization of
\(\bm{A}_{l,k}\). Therefore, the expectation in Eq.~\eqref{eq:app_page_start}
acts only on \(\bm{A}_{l,k}^{\top}\bm{A}_{l,k}\). Moving this expectation
inside the trace gives
\begin{equation}
    \mathbb{E}_{\bm{A}_{l,k}}
    \!\left[
        \operatorname{tr}
        \!\left(
            \bm{G}_{i,l,k}^{\top}\bm{G}_{i,l,k}\;
            \bm{A}_{l,k}^{\top}\bm{A}_{l,k}
        \right)
    \right]
    =
    \operatorname{tr}
    \!\left(
        \bm{G}_{i,l,k}^{\top}\bm{G}_{i,l,k}\;
        \mathbb{E}_{\bm{A}_{l,k}}
        \!\left[
            \bm{A}_{l,k}^{\top}\bm{A}_{l,k}
        \right]
    \right).
    \label{eq:app_expectation_inside_trace}
\end{equation}
Eq.~\eqref{eq:app_expectation_inside_trace} isolates the only random term \(\bm{A}_{l,k}^{\top}\bm{A}_{l,k}\)

Substituting Eq.~\eqref{eq:app_expected_ATA1} into
Eq.~\eqref{eq:app_expectation_inside_trace}, we get
\begin{equation}
    \operatorname{tr}
    \!\left(
        \bm{G}_{i,l,k}^{\top}\bm{G}_{i,l,k}\;
        \mathbb{E}_{\bm{A}_{l,k}}
        \!\left[
            \bm{A}_{l,k}^{\top}\bm{A}_{l,k}
        \right]
    \right) 
    =
    \operatorname{tr}
    \!\left(
        \bm{G}_{i,l,k}^{\top}\bm{G}_{i,l,k}\;
        \frac{r}{3\,d_{\mathrm{in},k}}\,
        \bm{I}_{d_{\mathrm{in},k}}
    \right).
\label{eq:app_substitute_expected_ATA}
\end{equation}
Eq.~\eqref{eq:app_substitute_expected_ATA} replaces the random projection term
with a deterministic scalar multiple of the identity matrix.

Because \(r/(3d_{\mathrm{in},k})\) is a scalar and
\(\bm{I}_{d_{\mathrm{in},k}}\) is the identity matrix, the right-hand side of
Eq.~\eqref{eq:app_substitute_expected_ATA} becomes
\begin{equation}
    \operatorname{tr}
    \!\left(
        \bm{G}_{i,l,k}^{\top}\bm{G}_{i,l,k}\;
        \frac{r}{3\,d_{\mathrm{in},k}}\,
        \bm{I}_{d_{\mathrm{in},k}}
    \right)
    =
    \frac{r}{3\,d_{\mathrm{in},k}}
    \operatorname{tr}
    \!\left(
        \bm{G}_{i,l,k}^{\top}\bm{G}_{i,l,k}
    \right)
    =
    \frac{r}{3\,d_{\mathrm{in},k}}
    \left\|
        \bm{G}_{i,l,k}
    \right\|_{\mathrm{F}}^{2}.
\label{eq:app_expected_projected_norm}
\end{equation}
The last equality in Eq.~\eqref{eq:app_expected_projected_norm} uses the
Frobenius norm identity
\(\|\bm{G}_{i,l,k}\|_{\mathrm{F}}^2
=
\operatorname{tr}(\bm{G}_{i,l,k}^{\top}\bm{G}_{i,l,k})\).

Substituting Eq.~\eqref{eq:app_expected_projected_norm} into
Eq.~\eqref{eq:app_page_start}, we obtain
\begin{equation}
    \mathrm{PAGE}_{l,k}
    =
    \frac{s^{2}}{N}
    \sum_{i=1}^{N}
    \frac{r}{3\,d_{\mathrm{in},k}}
    \left\|
        \bm{G}_{i,l,k}
    \right\|_{\mathrm{F}}^{2}.
    \label{eq:app_page_after_projection}
\end{equation}
Eq.~\eqref{eq:app_page_after_projection} shows that PAGE is proportional to the
average squared Frobenius norm of the sample-wise full-weight gradients.

Factoring out the constants \(s^2\), \(r\), and \(3d_{\mathrm{in},k}\) from
Eq.~\eqref{eq:app_page_after_projection}, we obtain
\begin{equation}
    \mathrm{PAGE}_{l,k}
    =
    \frac{s^{2}r}{3\,d_{\mathrm{in},k}}
    \left(
        \frac{1}{N}
        \sum_{i=1}^{N}
        \left\|
            \bm{G}_{i,l,k}
        \right\|_{\mathrm{F}}^{2}
    \right).
    \label{eq:app_page_factorized}
\end{equation}
The term in parentheses is exactly the module sensitivity
\(\mathcal{S}^{\mathrm{emp}}_{l,k}\) defined in Eq.~\eqref{eq:sample_fisher_score}.
Substituting Eq.~\eqref{eq:sample_fisher_score} into
Eq.~\eqref{eq:app_page_factorized}, we get
\begin{equation}
    \mathrm{PAGE}_{l,k}
    =
    \frac{s^{2}r}{3\,d_{\mathrm{in},k}}\;
    \mathcal{S}^{\mathrm{emp}}_{l,k}.
    \label{eq:app_page_closed_form}
\end{equation}
This completes the proof.
\end{proof}

\section{Experimental Details}
\label{app:exp_details}

\paragraph{Hardware and default setup.}
All experiments are run on 8 NVIDIA H100 GPUs using distributed PyTorch.
Unless otherwise specified, we use the default training configuration in
Table~\ref{tab:default_hyper}.

\begin{table}[ht]
\centering
\caption{Default training configuration.}
\label{tab:default_hyper}

\small
\begin{tblr}{
  width       = 0.6\linewidth,
  colspec     = {
    Q[l,m,1.25]
    Q[l,m,1.55]
  },
  colsep      = 2.5pt,
  rowsep      = 1pt,
  row{1}      = {font=\bfseries},
}
  \toprule
  Setting                       & Value \\
  \midrule
  Precision                     & \texttt{bfloat16}, TF32 enabled \\
  DeepSpeed stage               & ZeRO-2 \\
  Epochs                        & 1 \\
  Per-device batch size         & 4 \\
  Gradient accumulation steps   & 8 \\
  Effective batch size          & 256 \\
  Maximum sequence length       & 3,072 \\
  Learning-rate schedule        & Linear warmup, then cosine decay \\
  Peak learning rate            & $2 \times 10^{-5}$ \\
  Warmup ratio                  & 0.03 of total training steps \\
  Warmup initial learning rate  & 0 \\
  Weight decay                  & 0 \\
  LoRA rank $r$                 & 64 \\
  LoRA $\alpha$                 & 128 \\
  LoRA dropout                  & 0 \\
  Trainable bias                & None \\
  Gradient checkpointing        & Enabled \\
  \bottomrule
\end{tblr}
\end{table}

\paragraph{Variant-specific hyperparameters.}
Table~\ref{tab:variant_hyper} lists the additional hyperparameters used by each LoRA variant. All other settings follow Table~\ref{tab:default_hyper}.

\begin{table}[h]
\centering
\caption{Variant-specific hyperparameters.}
\label{tab:variant_hyper}
\begin{tblr}{
  width       = \linewidth,
  colspec    = {l X[l]},
  colsep     = 3pt, 
  row{1}     = {font=\bfseries},
  hline{1,Z} = {wd=0.08em},
  hline{2}   = {wd=0.05em},
}
  Variant & Additional hyperparameters \\
  LoRA+   & $\text{lr\_ratio} = 8.0$ (matrix $B$ learns $8\times$ faster than $A$) \\
  AdaLoRA & $r_{\text{init}} = 32$, $r_{\text{target}} = 8$, $t_{\text{init}} = 80$, $t_{\text{final}} = 234$, $\Delta T = 10$, $\beta_1 = \beta_2 = 0.85$, $\lambda_{\text{orth}} = 0.1$ \\
  GraLoRA & $k = 4$ \\
\end{tblr}
\end{table}

\paragraph{Evaluation prompts.}
We use the following task-specific prompts:
\begin{itemize}
  \item \textbf{MATH}~\cite{hendrycks2021measuring}: ``Please solve this problem step by step. Put your final answer in \textbackslash boxed\{\}.''
  \item \textbf{CommonsenseQA}~\cite{talmor2019commonsenseqa} and \textbf{GSM8K}~\cite{cobbe2021training}: ``Let's think step by step. Please format the final answer at the end of the response as: The answer is \{answer\}.''
  \item \textbf{MMLU}~\cite{hendrycks2021mmlu}, \textbf{LogiQA}~\cite{liu2020logiqa}, and \textbf{TruthfulQA}~\cite{lin2022truthfulqa}: output only the option letter.
  \item \textbf{TyDiQA}~\cite{clark2020tydi}: extract the exact shortest answer span from the context without translation.
  \item \textbf{HumanEval+}~\cite{chen2021humaneval}: output only Python code, without explanations, examples, markdown code blocks, or comments.
\end{itemize}

\section{Full LoRA-Variant Results Across Backbones}
\label{app:full_results}

For each variant, we compare its original broad-placement setting.
For compactness, we abbreviate Qwen3-8B as Qwen and LLaMA-3.1-8B-Instruct as LLaMA in the \textbf{Model} column of Tables~\ref{tab:variants_all1} and~\ref{tab:variants_all2}.
Table~\ref{tab:variants_all1} reports general instruction-tuning results, and Table~\ref{tab:variants_all2} reports reasoning, coding, and multi-turn conversation results.
Overall, dominant-module placement consistently improves the average score while substantially reducing the number of trainable parameters.

\begin{table*}[ht]
\centering
\caption{Applying the dominant module placement method to LoRA variants on general instruction-tuning benchmarks. \textbf{Dom} indicates whether adapters are inserted into the dominant adaptation module. \textbf{bold} marks the better result between the standard and Dom settings for each method and backbone.}
\label{tab:variants_all1}

\footnotesize
\begin{tblr}{
  width       = \linewidth,
  colspec     = {
  Q[c,m,0.9]
  Q[l,m,1.5]
  Q[c,m,0.60]
  Q[c,m,0.85]
  *{3}{Q[c,m,0.93]}
  Q[c,m,1.2]
  *{2}{Q[c,m,0.93]}
  Q[c,m,0.60]
},
  colsep      = 1pt,
  rowsep      = 1pt,
  row{1}      = {font=\bfseries},
  cell{3,5,7,9,11,13,15,17,19,21,23,25}{3-11} = {bg=gray!15},
  cell{2}{1}  = {r=12}{font=\bfseries},
  cell{14}{1} = {r=12}{font=\bfseries},
  cell{2,4,6,8,10,12,14,16,18,20,22,24}{2} = {r=2}{},
  hline{1,Z}  = {wd=0.08em},
  hline{2,14} = {wd=0.05em},
  hline{4,6,8,10,12,16,18,20,22,24} = {2-11}{wd=0.03em},
}
Model & Method & Dom & Params & MMLU & TyDiQA & CQA & TruthfulQA & GSM8K & LogiQA & Avg. \\

Qwen
& LoRA~\cite{hu2022lora}
& \ding{55} & 334M & 71.01 & 68.83 & 81.33 & \textbf{73.32} & 86.96 & 56.22 & 72.94 \\
& & \ding{51}
& \textbf{2.1M}
& \textbf{71.39}
& \textbf{71.40}
& \textbf{82.63}
& 72.58
& \textbf{92.57}
& \textbf{56.53}
& \textbf{74.52} \\

& NoRM~\cite{jiang2025finetuning}
& \ding{55} & 302M & \textbf{72.81} & 71.15 & 78.05 & \textbf{75.15} & 92.49 & \textbf{57.76} & 74.57 \\
& & \ding{51}
& \textbf{0.81M}
& 71.17
& \textbf{71.15}
& \textbf{81.00}
& 73.32
& \textbf{93.71}
& 57.45
& \textbf{74.63} \\

& DoRA~\cite{liu2024dora}
& \ding{55} & 336M & \textbf{70.79} & 69.04 & 81.00 & 73.14 & 87.95 & 56.68 & 73.10 \\
& & \ding{51}
& \textbf{2.1M}
& 70.13
& \textbf{71.30}
& \textbf{81.98}
& \textbf{75.03}
& \textbf{92.95}
& \textbf{57.45}
& \textbf{74.81} \\

& LoRA+~\cite{hayou2024loraplus}
& \ding{55} & 334M & 70.60 & \textbf{70.67} & 79.03 & \textbf{77.11} & 92.49 & 55.15 & 74.18 \\
& & \ding{51}
& \textbf{2.1M}
& \textbf{70.70}
& 69.86
& \textbf{80.92}
& 74.30
& \textbf{94.31}
& \textbf{57.60}
& \textbf{74.62} \\

& AdaLoRA~\cite{zhang2023adalora}
& \ding{55} & 334M & \textbf{70.98} & \textbf{71.33} & 81.74 & 74.91 & 92.49 & 54.53 & 74.33 \\
& & \ding{51}
& \textbf{2.1M}
& 70.10
& 71.31
& \textbf{82.15}
& \textbf{75.15}
& \textbf{93.18}
& \textbf{57.14}
& \textbf{74.84} \\

& GraLoRA~\cite{jung2025gralora}
& \ding{55} & 334M & \textbf{70.45} & 68.04 & 80.84 & \textbf{75.64} & 91.43 & 56.07 & 73.75 \\
& & \ding{51}
& \textbf{2.1M}
& 70.05
& \textbf{71.39}
& \textbf{82.31}
& 75.15
& \textbf{93.03}
& \textbf{57.45}
& \textbf{74.90} \\

LLaMA
& LoRA~\cite{hu2022lora}
& \ding{55} & 321M & 63.53 & 67.19 & 69.78 & 45.17 & 81.35 & 37.63 & 60.78 \\
& & \ding{51}
& \textbf{2.3M}
& \textbf{64.14}
& \textbf{69.91}
& \textbf{73.46}
& \textbf{54.35}
& \textbf{85.52}
& \textbf{41.17}
& \textbf{64.76} \\

& NoRM~\cite{jiang2025finetuning}
& \ding{55} & 269M & \textbf{66.04} & \textbf{68.39} & \textbf{76.74} & 54.59 & 83.09 & 37.79 & 64.44 \\
& & \ding{51}
& \textbf{2.3M}
& 65.73
& 67.79
& 75.10
& \textbf{54.96}
& \textbf{87.26}
& \textbf{41.94}
& \textbf{65.46} \\

& DoRA~\cite{liu2024dora}
& \ding{55} & 323M & \textbf{60.12} & 67.07 & \textbf{76.66} & \textbf{48.10} & 75.97 & 34.72 & 60.44 \\
& & \ding{51}
& \textbf{2.3M}
& 60.02
& \textbf{68.75}
& 74.28
& 45.29
& \textbf{78.39}
& \textbf{37.79}
& \textbf{60.75} \\

& LoRA+~\cite{hayou2024loraplus}
& \ding{55} & 321M & 62.58 & \textbf{68.38} & 70.11 & 45.65 & 81.73 & 37.02 & 60.91 \\
& & \ding{51}
& \textbf{2.3M}
& \textbf{64.71}
& 68.18
& \textbf{74.28}
& \textbf{52.75}
& \textbf{87.26}
& \textbf{42.70}
& \textbf{64.98} \\

& AdaLoRA~\cite{zhang2023adalora}
& \ding{55} & 321M & 61.15 & \textbf{68.51} & 74.20 & 46.51 & 84.69 & 37.17 & 62.04 \\
& & \ding{51}
& \textbf{2.3M}
& \textbf{65.03}
& 67.41
& \textbf{75.51}
& \textbf{52.14}
& \textbf{86.96}
& \textbf{40.86}
& \textbf{64.65} \\

& GraLoRA~\cite{jung2025gralora}
& \ding{55} & 321M & \textbf{63.87} & 67.69 & \textbf{74.77} & 45.29 & 76.42 & \textbf{39.17} & 61.20 \\
& & \ding{51}
& \textbf{2.3M}
& 60.77
& \textbf{68.98}
& 73.71
& \textbf{46.14}
& \textbf{81.27}
& 37.17
& \textbf{61.34} \\
\end{tblr}
\end{table*}

\begin{table*}[ht]
\centering
\caption{LoRA variant results on reasoning, coding, and conversation tasks.
\textbf{Dom} indicates whether adapters are inserted only into the dominant adaptation module. For each method and backbone, \textbf{bold} marks the better result between the standard and Dom settings.}
\label{tab:variants_all2}

\footnotesize
\begin{tblr}{
  width       = \linewidth,
  colspec     = {
  Q[c,m,0.9]
  Q[l,m,1.5]
  Q[c,m,0.60]
  Q[c,m,0.85]
  *{2}{Q[c,m,0.93]}
  *{2}{Q[c,m,1.2]}
  Q[c,m,0.93]
  Q[c,m,0.60]
  },
  colsep      = 1pt,
  rowsep      = 1pt,
  row{1}      = {font=\bfseries},
  cell{3,5,7,9,11,13,15,17,19,21}{3-10} = {bg=gray!15},
  cell{2}{1}  = {r=10}{font=\bfseries},
  cell{12}{1} = {r=10}{font=\bfseries},
  cell{2,4,6,8,10,12,14,16,18,20}{2} = {r=2}{},
  hline{1,Z}  = {wd=0.08em},
  hline{2,12} = {wd=0.05em},
  hline{4,6,8,10,14,16,18,20} = {2-10}{wd=0.03em},
}
Model & Method & Dom & Params & GSM8K & MATH & HumanEval & HumanEval+ & MT-Bench & Avg. \\

Qwen
& NoRM~\cite{jiang2025finetuning}
& \ding{55} & 302M & 93.40 & 66.48 & \textbf{75.6} & \textbf{72.0} & 61.31 & 73.8 \\
& & \ding{51}
& \textbf{0.81M}
& \textbf{93.63}
& \textbf{67.98}
& 73.8
& 69.5
& \textbf{74.13}
& \textbf{75.8} \\

& DoRA~\cite{liu2024dora}
& \ding{55} & 336M & 86.73 & 53.34 & \textbf{63.4} & 57.9 & 62.38 & 64.8 \\
& & \ding{51}
& \textbf{2.1M}
& \textbf{92.12}
& \textbf{65.06}
& \textbf{63.4}
& \textbf{59.8}
& \textbf{74.84}
& \textbf{71.0} \\

& LoRA+~\cite{hayou2024loraplus}
& \ding{55} & 334M & 92.80 & 65.60 & 64.6 & 59.8 & 74.13 & 71.4 \\
& & \ding{51}
& \textbf{2.1M}
& \textbf{94.01}
& \textbf{66.54}
& \textbf{75.0}
& \textbf{68.3}
& \textbf{76.00}
& \textbf{76.0} \\

& AdaLoRA~\cite{zhang2023adalora}
& \ding{55} & 334M & \textbf{92.27} & 64.10 & 67.1 & 61.6 & 74.58 & 71.9 \\
& & \ding{51}
& \textbf{2.1M}
& 91.28
& \textbf{64.32}
& \textbf{75.0}
& \textbf{70.7}
& \textbf{74.72}
& \textbf{75.2} \\

& GraLoRA~\cite{jung2025gralora}
& \ding{55} & 334M & 90.22 & 59.06 & 63.4 & \textbf{59.1} & 66.10 & 67.6 \\
& & \ding{51}
& \textbf{2.1M}
& \textbf{92.04}
& \textbf{66.82}
& \textbf{64.0}
& \textbf{59.1}
& \textbf{73.47}
& \textbf{71.1} \\

LLaMA
& NoRM~\cite{jiang2025finetuning}
& \ding{55} & 268M & 86.4 & 45.9 & 53.7 & 50.0 & 62.4 & 59.7 \\
& & \ding{51}
& \textbf{2.3M}
& \textbf{87.3}
& \textbf{47.1}
& \textbf{54.9}
& \textbf{51.2}
& \textbf{67.1}
& \textbf{61.5} \\

& DoRA~\cite{liu2024dora}
& \ding{55} & 323M & 83.2 & 39.4 & 54.3 & 50.0 & 61.7 & 57.7 \\
& & \ding{51}
& \textbf{2.3M}
& \textbf{85.9}
& \textbf{44.5}
& \textbf{55.5}
& \textbf{51.8}
& \textbf{66.4}
& \textbf{60.8} \\

& LoRA+~\cite{hayou2024loraplus}
& \ding{55} & 321M & 81.7 & 40.0 & 43.9 & 40.9 & 63.5 & 54.0 \\
& & \ding{51}
& \textbf{2.3M}
& \textbf{85.9}
& \textbf{45.7}
& \textbf{50.0}
& \textbf{48.8}
& \textbf{67.3}
& \textbf{59.5} \\

& AdaLoRA~\cite{zhang2023adalora}
& \ding{55} & 321M & 82.0 & 42.1 & 47.0 & 43.9 & 67.1 & 56.4 \\
& & \ding{51}
& \textbf{2.3M}
& \textbf{88.6}
& \textbf{46.6}
& \textbf{54.9}
& \textbf{51.2}
& \textbf{68.4}
& \textbf{62.0} \\

& GraLoRA~\cite{jung2025gralora}
& \ding{55} & 321M & \textbf{83.4} & 40.2 & 52.4 & 48.2 & 62.1 & 57.3 \\
& & \ding{51}
& \textbf{2.3M}
& 75.7
& \textbf{44.4}
& \textbf{54.9}
& \textbf{53.0}
& \textbf{64.9}
& \textbf{58.6} \\
\end{tblr}
\end{table*}


\section{Measuring the Training Efficiency of DomLoRA (A1)}
\label{app:efficiency}

Table~\ref{tab:efficiency} shows that dominant-module placement reduces training time across both backbones and all LoRA variants, with larger gains for heavier updates such as DoRA, whose average time drops from 2h\,18m to 1h\,04m. Peak memory changes little because the  backbones and activations dominate memory usage. Thus, DomLoRA mainly improves efficiency by reducing adapter-update overhead.

\begin{table*}[ht]
\centering
\caption{Training efficiency on general instruction tuning. \colorbox{gray!15}{Shaded rows} correspond to \ding{51} in \textbf{Dom}, where adapters are inserted only into the dominant adaptation module.}
\label{tab:efficiency}
\footnotesize
\begin{tblr}{
  colspec      = {Q[l,m,1.6]Q[c,m,0.5]*{6}{Q[c,m,1.3]}},
  width        = \linewidth,
  colsep       = 2.5pt,
  row{1,2}     = {font=\bfseries},
  row{4,6,8,10,12} = {bg=gray!15},
  cell{1}{1}   = {r=2}{},
  cell{1}{2}   = {r=2}{},
  cell{1}{3}   = {c=2}{c},
  cell{1}{5}   = {c=2}{c},
  cell{1}{7}   = {c=2}{c},
  cell{3}{1}   = {r=2}{},
  cell{5}{1}   = {r=2}{},
  cell{7}{1}   = {r=2}{},
  cell{9}{1}   = {r=2}{},
  cell{11}{1}  = {r=2}{},
}
    \toprule
    Method  & Dom        & Qwen3-8B             &                       & LLaMA-3.1-8B          &                       & Avg.                  &                \\ \cmidrule[lr]{3-4} \cmidrule[lr]{5-6} \cmidrule[lr]{7-8}
          &            & Time                 & Mem.\,(G)             & Time                  & Mem.\,(G)             & Time                  & Mem.\,(G)        \\ \midrule
    LoRA~\cite{hu2022lora}    & \ding{55}  & 1h\,30m              & 40.50                 & 1h\,15m               & 36.54                 & 1h\,23m               & 38.52          \\
          & \ding{51}  & \textbf{1h\,05m}     & \textbf{39.89}        & \textbf{53m}          & \textbf{35.96}        & \textbf{59m}          & \textbf{37.93}   \\ \hline
    DoRA~\cite{liu2024dora}    & \ding{55}  & 2h\,26m              & 40.50                 & 2h\,09m               & 36.55                 & 2h\,18m               & 38.53          \\
          & \ding{51}  & \textbf{1h\,05m}     & \textbf{39.89}        & \textbf{1h\,02m}      & \textbf{35.96}        & \textbf{1h\,04m}      & \textbf{37.93}   \\ \hline
    LoRA+~\cite{hayou2024loraplus}   & \ding{55}  & 1h\,30m              & 40.50                 & 1h\,15m               & 36.54                 & 1h\,23m               & 38.52          \\
          & \ding{51}  & \textbf{1h\,05m}     & \textbf{39.89}        & \textbf{53m}          & \textbf{35.96}        & \textbf{59m}          & \textbf{37.93}   \\ \hline
    AdaLoRA~\cite{zhang2023adalora} & \ding{55}  & 1h\,44m              & 40.50                 & 1h\,24m               & 36.55                 & 1h\,34m               & 38.53          \\
          & \ding{51}  & \textbf{1h\,05m}     & \textbf{39.89}        & \textbf{53m}          & \textbf{35.96}        & \textbf{59m}          & \textbf{37.93}   \\ \hline
    GraLoRA~\cite{jung2025gralora} & \ding{55}  & 1h\,39m              & 40.50                 & 1h\,24m               & 36.54                 & 1h\,32m               & 38.52          \\
          & \ding{51}  & \textbf{1h\,05m}     & \textbf{39.89}        & \textbf{53m}          & \textbf{35.96}        & \textbf{59m}          & \textbf{37.93}   \\ \bottomrule
\end{tblr}
\end{table*}


\section{Sensitivity to the LoRA Rank (A2)}
\label{app:rank_ablation}

Table~\ref{tab:rank_ablation} shows that DomLoRA is robust to the choice of LoRA rank on LLaMA-3.1-8B. Across $r\in\{16,32,64\}$, the average score remains nearly unchanged, ranging from 63.6 to 63.8. This indicates that the performance gain of dominant-module placement does not rely on using a large LoRA rank.

\begin{table*}[h]
\centering
\caption{Sensitivity of DomLoRA to the LoRA rank on LLaMA-3.1-8B.
$\alpha$ is set to $2r$ in all cases.}
\label{tab:rank_ablation}

\footnotesize
\begin{tblr}{
  width       = \linewidth,
  colspec     = {
    Q[c,m,1.2]
    *{9}{Q[c,m,1]}
  },
  colsep      = 2.5pt,
  rowsep      = 1pt,
  row{1}      = {font=\bfseries},
}
  \toprule
  ($r$,\,$\alpha$) & MMLU          & TyDiQA         & CQA            & Truthful QA & GSM8K          & MATH       & Human Eval+ & MT-Bench       & Avg.           \\
  \midrule
  (64,\,128)       & 64.1          & 69.9           & 73.5           & 54.4                     & 85.1           & 45.0           & 50.0                     & 67.0           & 63.6           \\
  (32,\,64)        & \textbf{64.9} & \textbf{70.0}  & 74.3           & \textbf{54.1}            & 85.2           & 44.6           & 49.4                     & \textbf{67.7}  & \textbf{63.8}  \\
  (16,\,32)        & 64.6          & 68.2           & \textbf{75.2}  & 52.9                     & \textbf{86.4}  & \textbf{44.9}  & 49.4                     & 67.1           & 63.6           \\
  \bottomrule
\end{tblr}
\end{table*}

\section{Full-Parameter Updates on the Dominant Module (A6)}
\label{app:full_ft}

Table~\ref{tab:full_ft} shows that restricting full fine-tuning to the dominant module achieves better performance than updating all modules on Qwen3-8B and LLaMA-3.1-8B. This suggests that the benefit of dominant-module placement is not specific to LoRA-style low-rank updates, but also holds when the selected module is directly fine-tuned.

\begin{table*}[h]
\centering
\caption{Full fine-tuning on Qwen3-8B and LLaMA-3.1-8B.
\textbf{Dom} indicates whether full-parameter updates are restricted to the dominant adaptation module.
For each method and backbone, \textbf{bold} marks the better result between the standard and Dom settings.}
\label{tab:full_ft}

\footnotesize
\begin{tblr}{
  width       = \linewidth,
  colspec = {Q[c,m]Q[c,m]*{9}{X[c,m]}},
  colsep      = 2.0pt,
  rowsep      = 1pt,
  row{1}      = {font=\bfseries},
  row{3,5}    = {bg=gray!15},
  cell{2}{1}  = {r=2}{font=\bfseries},
  cell{4}{1}  = {r=2}{font=\bfseries},
}
  \toprule
  Model & Dom & MMLU & TyDiQA & CQA & Truthful QA & GSM8K & MATH & Human Eval+ & MT-Bench & Avg. \\
  \midrule
  Qwen  & \ding{55} & \textbf{71.3} & \textbf{70.9} & \textbf{81.8} & 73.4          & 92.6          & 66.6          & 69.5          & \textbf{65.2} & 73.9          \\
        & \ding{51} & 69.9          & 70.4          & 81.6          & \textbf{74.9} & \textbf{92.7} & \textbf{68.0} & \textbf{70.7} & 63.8          & \textbf{74.0} \\
  \hline
  LLaMA & \ding{55} & 64.6          & 64.8          & 69.3          & 45.8          & 84.0          & 41.3          & 43.9          & 60.6          & 59.3          \\
        & \ding{51} & \textbf{65.2} & \textbf{66.8} & \textbf{75.6} & \textbf{52.5} & \textbf{86.4} & \textbf{47.6} & \textbf{52.4} & \textbf{67.0} & \textbf{64.2} \\
  \bottomrule
\end{tblr}
\end{table*}


\end{document}